\documentclass[]{article}

\usepackage{fullpage}
\usepackage{mathtools}
\usepackage{amssymb}
\usepackage{amsthm}
\usepackage{bm}
\usepackage{mleftright}
\usepackage[hypertexnames=false]{hyperref}
\usepackage[capitalize,noabbrev]{cleveref}
\usepackage{xcolor}
\usepackage[shortlabels]{enumitem}
\usepackage{algorithm}
\usepackage{algpseudocodex}
\usepackage{authblk}
\usepackage{etoolbox}
\usepackage[page]{appendix}
\usepackage[short]{optidef}
\usepackage{lmodern}
\usepackage[T1]{fontenc}
\usepackage[natbib=true]{biblatex}
\usepackage[final]{microtype}

\makeatletter
\theoremstyle{plain}
\newtheorem{thm}{Theorem}
\newtheorem{lem}{Lemma}
\newtheorem{assume}{Assumption}

\DeclarePairedDelimiter{\abs}{\lvert}{\rvert}
\DeclarePairedDelimiter{\norm}{\lVert}{\rVert}
\DeclarePairedDelimiterXPP{\inner}[2]{}{\langle}{\rangle}{}{#1,#2}

\newcommand{\bI}{\mathbb{I}}
\newcommand{\bN}{\mathbb{N}}
\newcommand{\bP}{\mathbb{P}}
\newcommand{\bR}{\mathbb{R}}

\newcommand{\cA}{\mathcal{A}}
\newcommand{\cB}{\mathcal{B}}
\newcommand{\cC}{\mathcal{C}}
\newcommand{\cL}{\mathcal{L}}
\newcommand{\cM}{\mathcal{M}}
\newcommand{\cN}{\mathcal{N}}
\newcommand{\cV}{\mathcal{V}}
\newcommand{\vone}{\bm{1}}
\newcommand{\vzero}{\bm{0}}
\newcommand{\PMan}{\widetilde{\cM}}

\DeclareMathOperator*{\argmin}{argmin}
\DeclareMathOperator{\rT}{T}
\DeclareMathOperator{\Diff}{D}
\DeclareMathOperator{\grad}{grad}
\DeclareMathOperator{\Hess}{Hess}
\DeclareMathOperator{\tr}{tr}
\DeclareMathOperator{\rank}{rank}
\DeclareMathOperator{\svect}{svec}
\DeclareMathOperator{\smat}{smat}
\DeclareMathOperator{\vect}{vec}
\DeclareMathOperator{\diag}{diag}
\DeclareMathOperator{\dvec}{dvec}
\DeclareMathOperator{\Retr}{Retr}
\DeclareMathOperator{\Proj}{Proj}
\DeclareMathOperator{\poly}{poly}%
\DeclareMathOperator{\Orth}{Orth}%

\newcommand{\printfnsymbol}[1]{%
	\textsuperscript{\@fnsymbol{#1}}%
}

\AtBeginEnvironment{appendices}{\crefalias{section}{appendix}\crefalias{subsection}{appendix}}
\makeatother

\addbibresource{arXiv_kmeans.bib}

\title{Scalable Second-order Riemannian Optimization for $K$-means Clustering}
\author[1]{Peng Xu\thanks{These authors contributed equally.}}
\author[2]{Chun-Ying Hou\printfnsymbol{1}}
\author[3]{Xiaohui Chen}
\author[2]{Richard Y.~Zhang}
\affil[1]{Department of Statistics, University of Illinois Urbana-Champaign \authorcr\texttt{pengxu1@illinois.edu}}
\affil[2]{Department of Electrical and Computer Engineering, University of Illinois Urbana-Champaign \authorcr\texttt{\{cyhou2,ryz\}@illinois.edu}}
\affil[3]{Department of Mathematics, University of Southern California \authorcr\texttt{xiaohuic@usc.edu}}

\begin{document}

\maketitle

\begin{abstract}
Clustering is a hard discrete optimization problem. Nonconvex approaches such as low-rank semidefinite programming (SDP) have recently demonstrated promising statistical and local algorithmic guarantees for cluster recovery. Due to the combinatorial structure of the $K$-means clustering problem, current relaxation algorithms struggle to balance their constraint feasibility and objective optimality, presenting tremendous challenges in computing the second-order critical points with rigorous guarantees. In this paper, we provide a new formulation of the $K$-means problem as a smooth unconstrained optimization over a submanifold and characterize its Riemannian structures to allow it to be solved using a second-order cubic-regularized Riemannian Newton algorithm. By factorizing the $K$-means manifold into a product manifold, we show how each Newton subproblem can be solved in linear time. Our numerical experiments show that the proposed method converges significantly faster than the state-of-the-art first-order nonnegative low-rank factorization method, while achieving similarly optimal statistical accuracy.
\end{abstract}

\section{Introduction}\label{sec:intro}
Clustering is a cornerstone of modern unsupervised learning, where the goal is to group similar observations into meaningful clusters. The problem is commonly approached through the $K$-means formulation, which seeks to partition $n$ data points $X_{1},X_{2},\dotsc,X_{n}\in\bR^{d}$ into $K$ disjoint groups $G_{1},\dotsc,G_{K}$ by maximizing the total intra-cluster similarity:
\begin{equation}\label{eqn:Kmeans}
\max_{G_{1},\dots,G_{K}} \mleft\{ \sum_{k=1}^{K}\frac{1}{\abs{G_{k}}}\sum_{i,j\in G_{k}}\inner{X_{i}}{X_{j}}: \bigsqcup_{k=1}^{K}G_{k}=[n]\mright\}.
\end{equation}
Here, the inner product $\inner{X_{i}}{X_{j}}=X_{i}^\top X_{j}$ is used to measure pairwise similarity, $\abs{G_{k}}$ denotes the cardinality
of $G_{k}$, and $\sqcup$ denotes disjoint union. Most common algorithms for $K$-means clustering, including Lloyd's algorithm~\citep{Lloyd1982_TIT} and spectral clustering~\citep{NgJordanWeiss2001_NIPS,vanLuxburg2007_spectralclustering}, can be understood as heuristics for finding ``good enough'' solutions to the discrete optimization (\ref{eqn:Kmeans}). These methods do not come with any guarantees of local optimality, let alone global optimality. Indeed, it is commonly argued that globally solving (\ref{eqn:Kmeans}) is NP-hard in the worst-case~\citep{Dasgupta2007,aloise2009np}, and would lead to statistically meaningless clustering that overfits the data. 

Yet in average-case regimes, globally solving the $K$-means optimization problem (\ref{eqn:Kmeans}) can be both computationally tractable as well as statistically optimal. In particular, when the data $X_{1},\dots,X_{n}$
arise from a Gaussian mixture model with sufficiently well-separated components, \citet{chen2021cutoff} showed that a well-known semidefinite programming (SDP) relaxation of \citet{PengWei2007_SIAMJOPTIM}, written
\begin{equation}\label{eq:sdp}
\max_{Z\in\bR^{n\times n}}\Biggl\{ \inner{XX^\top }Z+\mu\sum_{i,j}\log(Z_{i,j})_{+} : Z\vone_{n}=\vone_{n}, \,\tr(Z)=K,\, Z\succeq0 \Biggr\}, 
\end{equation}
where $X=[X_{1},\dotsc,X_{n}]^\top$ and $\log(Z_{i,j})_{+}\coloneqq\log(\max\{Z_{i,j},0\})$, is guaranteed to compute the globally
optimal clusters $G_{1}^{\star},\dotsc,G_{K}^{\star}$ for (\ref{eqn:Kmeans})
in the limit $\mu\to0^{+}$ in polynomial time, that in turn recover
the ground truth partitions. Note that the formulation (\ref{eq:sdp}) is equivalent to the standard $K$-means SDP formulation with the elementwise nonnegativity constraint $Z_{i,j} \geq 0$ in~\citet{PengWei2007_SIAMJOPTIM,chen2021cutoff} (see Appendix~\ref{app:SDP_formulation_equiv} for more discussions). Moreover, this recovery occurs as soon
as the separation between the clusters is large enough for it to be possible. Put in another way, if solving (\ref{eq:sdp}) does not recover the ground truth partitions, then the clusters are too closely spaced in a way that makes recovery inherently impossible in an information-theoretic
limit sense, see \Cref{subsec:background_SDP_Kmeans} for more details.

Unfortunately, the SDP (\ref{eq:sdp}) is not a practical means of solving
(\ref{eqn:Kmeans}) to global optimality, due to its need to optimize
over an $n\times n$ matrix to cluster $n$ samples. Following \citet{BurerMonteiro2003,boumal2020deterministic}, a natural alternative
is to factor $Z=UU^{\top}$ into its $n\times r$ factor matrix $U$
for rank parameter $r\ge K$, impose the logarithmic penalty over
$U$ instead of $Z$, and then directly optimize over $U$:
\begin{equation}\label{eq:manif}
\max_{U\in\bR^{n\times r}}\Biggl\{\inner{XX^\top}{UU^\top}+\mu\sum_{i,j}\log(U_{i,j})_{+}
: UU^\top\vone_n=\vone_n,\,\tr(UU^\top)=K\Biggr\}.
\end{equation}
This reduces the number of variables and constraints from $O(n^{2})$
down to $O(n)$, but at the cost of giving up the convexity of the
SDP. In general, we can at best hope to compute critical points, which
may be spurious local minima or saddle points. The core motivation
for our approach, and the impetus for this paper, is the surprising
empirical observation that all second-order critical points are global
optima in this setting; this is formalized as the following assumption.

\begin{assume}[Benign nonconvexity]\label{asm:bengin}In the average-case
regime when (\ref{eq:sdp}) globally solves (\ref{eqn:Kmeans}), all
approximate second-order critical points in (\ref{eq:manif}) are
within a neighborhood of a global optimum.\end{assume}

The phenomenon of benign nonconvexity is well-documented in
the \emph{unconstrained} version---optimizing over
semidefinite $Z\succeq0$ by factorizing $Z=UU^{\top}$---dating back to the early works of \citet{BurerMonteiro2003}. In contrast, it is rarely seen in our \emph{nonnegative} variant, which adds the elementwise constraint $U \ge 0$ to enforce doubly nonnegativity in $Z=UU^{\top}$.  Despite a superficial similarity, the two formulations differ in fundamental ways, with the nonnegative case known to admit numerous spurious critical points; see \Cref{sec:related} for some classic and recent examples. Nevertheless, we consistently observe that all second-order critical points correspond to global optima, that in turn successfully recover the
optimal clusters.

\subsection{Contributions: Cheap and fast convergence to second-order critical points}

Under Assumption~\ref{asm:bengin}, globally solving the $K$-means optimization problem (\ref{eqn:Kmeans}) reduces to that of computing a second-order critical point for (\ref{eq:manif}). Unfortunately,
in the constrained nonconvex setting, there is no general-purpose algorithm that is rigorously guaranteed to compute a second-order
critical point.
The core issue is the need to maintain \emph{feasibility}, i.e.\@ for each iterate $U$ to satisfy the nonconvex constraints $UU^\top\vone_n=\vone_n$ and $\tr(UU^\top)=K$, while making progress towards optimality. General-purpose solvers like \texttt{fmincon} \citep{byrd2000trust} and \texttt{knitro} 
\citep{byrd2006k} promise
convergence only to critical points of an underlying merit function,
which may be infeasible for the original problem. Augmented Lagrangian
methods guarantee convergence only to first-order critical points,
and only when starting within a local neighborhood~\citep{ZhuangChenYangZhang2024_ICLR}.
This is a significant departure from the unconstrained nonconvex setting,
where a diverse range of algorithms---both cheap first-order algorithms
like gradient descent, as well as rapidly-converging second-order
methods like trust-region Newton's method---globally converge to
a second-order critical point starting from any initial point.

Our first contribution is to present an interpretation of (\ref{eq:manif})
as a \emph{smooth unconstrained optimization over a Riemannian manifold}. This allows the immediate
benefit of extending the wide array of unconstrained optimization
algorithms to the constrained setting, as well as their accompanying
guarantees for first- and second-order optimality. For the first
time in the context of $K$-means, we open the possibility to guarantee global convergence to first- and second-order
optimality. 

Our second contribution is to show that \emph{second-order Riemannian
algorithms can be implemented with linear per-iteration costs} with
respect to the number of samples $n$. In other words, of all practical
algorithms to compute second-order critical points, we show that the
one with the best iteration complexity (second-order methods) can
be improved to have the same per-iteration costs as first-order methods. Our final algorithm computes $\epsilon$ second-order points in $n\cdot\epsilon^{-3/2}\cdot\poly(r,d)$
time. 

\subsection{Related work}\label{sec:related}

Benign nonconvexity in the unconstrained Burer--Monteiro factorization
$Z=UU^{\top}$ has been empirically observed since the early 2000s~\citep{BurerMonteiro2003}, and widely exploited in nonconvex low-rank algorithms in machine learning. In the past decade, theory has been developed to explain this phenomenon under some specialized settings~\citep{bhojanapalli2016global,ge2016matrix,bandeira2016low,boumal2016non,ge2017no}. Unfortunately, these guarantees tend to be conservative in the number of samples or the level of noise; they capture the general phenomenon but cannot
rigorously explain what is broadly observed in practice.

In contrast, the \emph{nonnegative} Burer--Monteiro factorization $Z=UU^{\top}$
with $U\ge0$ is widely understood \emph{not} to exhibit benign nonconvexity.
To give two simple examples, the functions $f(U)=\inner{SU}U$ and
$f(U)=\norm{UU^{\top}-U_{\star}U_{\star}^{\top}}^{2}_F$ are easily confirmed
to exhibit benign nonconvexity over $U\in\bR^{n\times r}$. But imposing $U\ge0$ causes spurious local minima to proliferate; this
is unsurprising because both problems, namely copositive testing~\citep{murty1987some}
and complete positive testing~\citep{dickinson2014computational},
are well-known to be NP-hard. For a more sophisticated example, the
function $f(U)=\norm{\mathcal{A}(UU^{\top}-U_{\star}U_{\star}^{\top})}^{2}$
is well known to exhibit benign nonconvexity when the linear operator
$\mathcal{A}:S^{n}(\bR)\to\bR^{m}$ satisfies the restricted isometry property
(RIP)~\citep{bhojanapalli2016global}. In this context, a recent work~\citep{zhang2025nonnegative} gave a strong counterexample for the equivalent statement over $U\ge0$. 

Therefore, even though $K$-means is widely known to admit a nonnegative
Burer--Monteiro reformulation \citep{PengWei2007_SIAMJOPTIM}, there have been only two prior works that actually follow this approach, to the best of our knowledge.
Neither of these can rigorously guarantee global optimality under
Assumption~\ref{asm:bengin}. The first is the first-order Riemannian
method introduced by \citet{CarsonMixonVillarWard_manifold-Kmeans}.
It solves the following:
\begin{equation}
\min_{U\in\cM'}\mleft\{ -\inner{XX^{\top}}{UU^{\top}}+\lambda\norm*{U_{-}}_{F}^{2}\mright\}\label{eqn:Carson_manifold}
\end{equation}
where \(\cM'\coloneqq\mleft\{U\in\bR^{n\times K}:U^{\top}U=I_{K},UU^{\top}\vone_n=\vone_n\mright\},\) and $U_{-}=\max\{-U,0\}$ is the (entrywise) negative part of $U$, $\lambda\geq0$ is the penalty parameter for $U\geq0$. Although superficially similar, their approach fundamentally lacks a convergence guarantee to a second-order critical point, due to: (i) their nonsmooth objective; (ii) their use of a smooth penalty, which cannot truly enforce feasibility $U\ge0$; (iii) their use of a first-order method, which can get trapped at a saddle point. Moreover, their manifold is geometrically complicated, necessitating an expensive retraction that costs $O(n^{2})$ time, which prevents their method from scaling to large datasets. 

The second work is the nonnegative low-rank (NLR) method of \citet{ZhuangChenYangZhang2024_ICLR}. This is a simple projected gradient descent that directly projects $U$ onto the nonnegative spherical constraint and deals the row sum constraint $UU^{\top}\vone_n=\vone_n$ via the augmented Lagrangian
method. It is a first-order primal-dual method that can only achieve local linear convergence in a neighborhood of its global solution. Like~\citet{CarsonMixonVillarWard_manifold-Kmeans}, it is unclear whether there is a pathway that this algorithm can lead to a global optimality guarantee, or even to second-order optimality.

Recently, hybrid methods \citep{Wang2025, msc2025, doi:10.1287/opre.2024.1137, doi:10.1137/22M1474539} have been proposed for low-rank SDPs. These approaches handle simple manifold constraints via Riemannian optimization, while enforcing the remaining constraints through augmented-Lagrangian updates. In contrast, our reformulated $K$-means problem requires strict feasibility with respect to both nonnegativity and the simplex-type manifold constraints, since the recovered factor must encode a valid partition. Augmented-Lagrangian or projection-based schemes do not preserve this property and would break the structural guarantees on which our method relies.

\section{Background}\label{sec:background}

\subsection{SDP relaxation of \texorpdfstring{$K$}{K}-means}\label{subsec:background_SDP_Kmeans}

Despite the worst-case NP-hardness of the $K$-means clustering optimization problem~(\ref{eqn:Kmeans}), common practical heuristics and relaxed formulations like Lloyd's algorithm~\citep{Lloyd1982_TIT}, spectral clustering~\citep{NgJordanWeiss2001_NIPS,vanLuxburg2007_spectralclustering}, nonnegative matrix factorization (NMF)~\citep{6061964,kuang2015symnmf,wang2012nonnegative}
and SDPs~\citep{PengWei2007_SIAMJOPTIM,Royer2017_NIPS,FeiChen2018,ChenYang_nonEuclideanKmeans} work surprisingly well at solving it for real-world data. To explain
this discrepancy between theory and practice, suppose that the data
$X_{1},\dots,X_{n}\in\bR^{d}$ are generated from a standard Gaussian
mixture model (GMM)
\begin{equation}
X_{i}=\mu_{k}+\varepsilon_{i},\qquad\varepsilon_{i}\overset{\text{i.i.d.}}{\sim}\cN(0,\sigma^2I_{d}),\quad\text{for }i\in G_{k}^{*},\label{eqn:GMM}
\end{equation}
where $G_{k}^{*}$ denotes the ground truth clusters. \citet{chen2021cutoff}
proved that the SDP (\ref{eq:sdp}) of \citet{PengWei2007_SIAMJOPTIM} (as $\mu \to 0^{+}$)
achieves a \emph{sharp phase transition} on the separation of centroids for the clustering problem, in any dimension $d$ and sample size $n$. Let 
\begin{equation}
\overline{\Theta}^{2}\coloneqq4\sigma^{2}\mleft(1+\sqrt{1+\frac{Kd}{n\log{n}}}\mright)\log{n},\label{eqn:recovery_threshold}
\end{equation}
and $\Theta_{\min}\coloneqq\min_{1\leq j<k\leq K}\norm{\mu_{j}-\mu_{k}}$
be the minimum centroid separation. Assume that $m=n/K$ is an integer without loss of generality and consider any $\alpha > 0$. As soon as the exact recovery becomes possible in the regime $\Theta_{\min}\geq(1+\alpha)\overline{\Theta}$, the SDP approach (\ref{eq:sdp}) solves the $K$-means problem without clustering error with high probability. For precise statements on the information-theoretic threshold, please refer to Theorem~\ref{thm:recovery_threshold} in Appendix~\ref{sec:info_threshold}. As an immediate consequence of the global optimality guarantee of the $K$-means SDP in (\ref{eq:sdp}), we deduce that the global solution of the nonconvex low-rank SDP in~(\ref{eq:manif}) solves the $K$-means clustering problem in~(\ref{eqn:Kmeans}) in the exact recovery regime.




Next, from the membership matrix
$Z$, we would like to convert it to the cluster label.
\begin{lem}
\label{lem:membership2clusterlabel} Let $Z=Z^\top\in\bR^{n\times n}$
be the symmetric block-diagonal matrix defined by $Z_{ij}=1/\abs{G_{k}}$
if $i,j\in G_{k}$, and $Z_{ij}=0$ otherwise. Then for any integer
$r\in[K,n]$, there is a unique (up to column permutation) $U\in\bR_{+}^{n\times K}$ such that $Z=U U^\top$. Moreover, $U$ can be recovered
from any $\hat{U}\in\bR^{n\times r}$ satisfying $Z=\hat{U}\hat{U}^\top$ in $n\cdot\poly(r)$
time.
\end{lem}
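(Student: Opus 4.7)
The plan is to exhibit an explicit canonical factorization, prove uniqueness by exploiting the disjoint support of the indicator vectors $\mathbf{1}_{G_k}$, and then turn the uniqueness argument into a linear-time combinatorial recovery algorithm.

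First I would define the canonical $U \in \bR^{n\times K}$ by $U_{ik} = 1/\sqrt{\abs{G_k}}$ if $i \in G_k$ and $0$ otherwise, i.e.\ the $k$-th column is $\mathbf{1}_{G_k}/\sqrt{\abs{G_k}}$. A direct calculation shows $UU^\top = Z$, so existence is immediate. Note also that $Z$ has rank exactly $K$ because the columns of $U$ are orthogonal and nonzero. For uniqueness up to permutation, suppose $V \in \bR_+^{n\times K}$ satisfies $VV^\top = Z$. Since $Z$ has rank $K$, each column $v_k$ of $V$ lies in $\operatorname{col}(Z) = \operatorname{span}\{\mathbf{1}_{G_1},\dots,\mathbf{1}_{G_K}\}$, so $v_k = \sum_j c_{jk}\mathbf{1}_{G_j}$ with $c_{jk}\ge 0$ (using the disjoint support of the indicators and $v_k \ge 0$). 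The off-diagonal block condition $(VV^\top)_{ii'}=0$ for $i\in G_j,\,i'\in G_{j'},\,j\ne j'$ forces $\sum_k c_{jk}c_{j'k}=0$, hence $c_{jk}c_{j'k}=0$ for all $k$ and $j\ne j'$. Thus each column $v_k$ is supported on a single group $G_{j_k}$. Linear independence of the columns of $V$ then forces $k\mapsto j_k$ to be a permutation of $[K]$, and the diagonal block condition $1/\abs{G_{j_k}}=c_{j_k,k}^2$ pins down the normalization. So $V$ equals $U$ up to a column permutation.

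Next I would tackle the algorithmic recovery from a given $\hat U\in\bR^{n\times r}$ satisfying $Z=\hat U\hat U^\top$. The key observation is that if $\hat u_i^\top$ denotes the $i$-th row of $\hat U$, then $\inner{\hat u_i}{\hat u_j} = Z_{ij}$, which equals $1/\abs{G_k}>0$ when $i,j$ are in the same cluster $G_k$ and $0$ otherwise. Since $\inner{\hat u_i}{\hat u_j} = \norm{\hat u_i}\cdot\norm{\hat u_j}$ in the same-cluster case, the rows within a cluster must be positive scalar multiples of one another (in particular parallel), while rows in distinct clusters are orthogonal. The algorithm is then a simple sweep: pick any unassigned row, compute its inner products with all remaining rows in $O(nr)$ time, group the rows with strictly positive inner product into a new cluster, and repeat until all rows are assigned. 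This terminates in at most $K\le r$ sweeps for a total cost of $O(nKr)=n\cdot\poly(r)$, after which the recovered $U$ is built by setting $U_{ik}=1/\sqrt{\abs{G_k}}$ on cluster $k$ and $0$ elsewhere.

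The main obstacle is the uniqueness step: one must rule out nontrivial nonnegative mixings of the indicator directions. The disjoint-support trick for $\bR_+$-linear combinations of $\mathbf{1}_{G_j}$ is what makes it clean, and it is specifically this combination of nonnegativity plus block-diagonal structure of $Z$ that prevents the usual orthogonal ambiguity of SDP factorizations. For the algorithmic part, one mild subtlety is that $\hat U$ may be wider than strictly needed ($r\ge K$), but this does not affect the inner-product test since $\inner{\hat u_i}{\hat u_j}=Z_{ij}$ is coordinate-free in $r$.
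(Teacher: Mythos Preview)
Your argument is correct. The existence, uniqueness, and recovery steps all go through as written; the key moves---identifying $\operatorname{col}(V)=\operatorname{col}(Z)$ via the rank count, reading off $c_{jk}\ge 0$ from the disjoint supports, and collapsing the off-diagonal block sums termwise by nonnegativity---are exactly what is needed, and the row-inner-product sweep is a clean $O(nKr)$ procedure.

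Your route, however, is genuinely different from the paper's. The paper does not argue directly: it simply observes that $Z$ contains a rank-$K$ diagonal principal submatrix (pick one representative index from each $G_k$) and then invokes Theorem~4 of \citet{KALOFOLIAS2012421}, a general uniqueness/recovery result for nonnegative factorizations under such a separability-type condition. So the paper outsources both uniqueness and the algorithm to a known NMF result, whereas you give a self-contained elementary proof tailored to the block structure of $Z$. Your approach has the advantage of being fully explicit and not requiring the reader to chase an external reference; the paper's approach is terser and situates the lemma within the broader NMF uniqueness literature.
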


For each block-diagonal membership matrix $Z$, the unique $U\in\bR_{+}^{n\times K}$
in Lemma~\ref{lem:membership2clusterlabel} is the associated group
assignment matrix, i.e.\ the $k$-th column of $U$ provides a one-hot encoding of membership in the $k$-th cluster.

\subsection{Critical points in constrained optimization}

The problems considered in this paper are instances of the following
\begin{equation}
\min_{U\in\cM}f(U),\qquad\cM=\{U\in\bR^{n\times r}:\cA(UU^{\top})+\cB(U)=c\},\label{eq:manopt}
\end{equation}
where the linear operators $\cA\colon\bR^{n\times n}\to\bR^{m}$ and $\cB\colon\bR^{n\times r}\to\bR^{m}$
and right-hand side $c\in\bR^{m}$ together are assumed to satisfy the linear independence constraint qualification (LICQ)
\begin{equation}
2[\cA^{\top}(y)]U+\cB^{\top}(y)=\vzero\iff y=\vzero\quad\forall U\in\cM.\label{eq:manlicq}
\end{equation}
In this context, $U\in\bR^{n\times r}$ is said to be \emph{feasible}
if it satisfies $U\in\cM$. The feasible point $U$ is an $\epsilon$-first-order
critical point if it satisfies
\begin{equation}
\text{exists}\quad y\in\bR^{m}\quad \text{s.t.}\quad \norm*{\nabla f(U)+2[\cA^{\top}(y)]U+\cB^{\top}(y)}\le\epsilon,\label{eq:FOC}
\end{equation}
and an $\epsilon$-second-order critical point if it additionally satisfies
\begin{gather}
\inner{\nabla^{2}f(U)+2[\cA^{\top}(y)]}{\dot{U}\dot{U}^{\top}}\ge\sqrt{\epsilon}\norm{\dot{U}}^{2}\quad\forall\dot{U}\in\rT_{U}\cM\label{eq:SOC}
\end{gather}
over the \emph{tangent space} of $\cM$ at the point $U$, given by $\rT_{U}\cM=\{\dot{U}\in\bR^{n\times r}:\cA(U\dot{U}^{\top}+\dot{U}U^{\top})+\cB(\dot{U})=0\}$.

Under LICQ (\ref{eq:manlicq}), every local minimum (and hence the global minimum) is guaranteed to be an $\epsilon$-second-order critical point (for any $\epsilon\ge0$). Unfortunately, there is no general-purpose algorithm that is guaranteed to converge to a critical point, due to the need to achieve and maintain feasibility across all iterates. 

\subsection{Second-order Riemannian optimization}\label{subsec:Riemann}

Riemannian algorithms are special algorithms that maintain feasible
iterates through a problem-specific \emph{retraction} operator, and
are hence able to rigorously guarantee convergence to critical points.
The basic idea is to improve a feasible iterate $U\in\cM$ by tracing
a smooth curve on the feasible set $\gamma:[0,\epsilon)\to\cM$ that
begins at $\gamma(0)=U$ and proceeds in a direction of descent $\dot{\gamma}(0)=\dot{U}\in\rT_{U}\cM$.
In analogy with unconstrained algorithms, a good choice of $\dot{U}\in\rT_{U}\cM$
is found through a local Taylor expansion
\begin{equation}
f\bigl(\gamma(t)\bigr)=f(U)+t\inner{\grad f(U)}{\dot{U}}+\frac{t^{2}}{2}\inner{\Hess f(U)[\dot{U}]}{\dot{U}}+O(t^{3}),\label{eq:RTaylor}
\end{equation}
where $\grad f$ and $\Hess f$ are respectively the \emph{Riemannian
gradient} and \emph{Riemannian Hessian} of $f$ on the manifold $\cM$.
Afterwards, we trace the curve $\gamma(t)=\Retr_{U}(t\dot{U})$ using
a \emph{second-order retraction} operator $\Retr_{U}\colon\rT_{U}\cM\to\cM$
satisfying
\[
\Retr_{U}(0)=U,\qquad\mleft.\frac{d}{dt}\Retr_{U}(t\dot{U})\mright|_{t=0}=\dot{U},\qquad\mleft.\frac{d^{2}}{dt^{2}}\Retr_{U}(t\dot{U})\mright|_{t=0}\perp\rT_{U}\cM,
\]
for all $U\in\cM$ and all $\dot{U}\in\rT_{U}\cM$. After choosing
step-size $t$ so that $U_{\text{new}}=\gamma(t)$ makes a sufficient improvement
over $U$, we repeat the algorithm until it reaches an $\epsilon$-second-order
critical point satisfying $\norm{\grad f(U)}\le\epsilon$ and $\lambda_{\min}\bigl(\Hess f(U)\bigr)\ge-\sqrt{\epsilon}$,
which incidentally corresponds exactly to (\ref{eq:FOC}) and (\ref{eq:SOC}).
Proofs for the following convergence result can be found in \citet{zhang2018cubic,boumal2019global,agarwal2021adaptive};
we have chosen the simplest but most restrictive settings to ease
the exposition.

\begin{thm}[Riemannian cubic-regularized Newton]
\label{thm:rcr} Suppose that $\min_{U\in\cM}f(U)>-\infty$, and that
the pullback $\hat{f}=f\circ\Retr_{U}$ has Lipschitz continuous Hessian
for all $U\in\cM$. Then, there exists a sufficiently large regularizer
$L$ such that $U_{k+1}=\Retr_{U_{k}}(\dot{U}_{k})$ where
\begin{align*}
\dot{U}_{k}=\argmin_{\dot{U}\in\rT_{U}\cM}f(U)+\inner{\grad f(U)}{\dot{U}}+\frac{1}{2}\inner{\Hess f(U)[\dot{U}]}{\dot{U}}+\frac{L}{6}\norm{\dot{U}}^{3}
\end{align*}
converges to an $\epsilon$-second order critical point in
$O(\epsilon^{-3/2})$ iterations, independent of dimension.
\end{thm}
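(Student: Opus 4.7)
The plan is to transfer the classical Euclidean analysis of cubic-regularized Newton (Nesterov--Polyak) to the manifold setting by working entirely with the pullback $\hat{f}_{U_k} \coloneqq f \circ \Retr_{U_k}$ on the vector space $\rT_{U_k}\cM$. The key enabling observation is that because $\Retr$ is a \emph{second-order} retraction, the first two derivatives of $\hat{f}_{U_k}$ at the origin agree with the Riemannian gradient and Hessian of $f$ at $U_k$, so every condition the subproblem enforces on $\dot{U}_k$ is simultaneously a condition on the intrinsic quantities $\grad f(U_k)$ and $\Hess f(U_k)$. With Lipschitz continuity of the pullback Hessian (with constant, say, $\rho$) already assumed, the cubic model can be shown to upper-bound $\hat{f}_{U_k}$ whenever $L \ge \rho$, which makes standard majorization--minimization arguments directly applicable.

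With that setup, I would carry out four steps. First, a \emph{sufficient decrease lemma}: since the chosen $L$ makes the cubic a global upper bound on $\hat{f}_{U_k}$, evaluating at the minimizer $\dot{U}_k$ and combining with its optimality conditions yields $f(U_k)-f(U_{k+1}) \ge c_1 L \norm{\dot{U}_k}^3$ for an absolute constant $c_1$, by the usual algebraic manipulation in Nesterov--Polyak. Second, a \emph{residual bound}: the first- and second-order stationarity of the cubic subproblem give
\begin{equation*}
\grad f(U_k) + \Hess f(U_k)[\dot{U}_k] = -\tfrac{L}{2}\norm{\dot{U}_k}\dot{U}_k, \qquad \Hess f(U_k) + L\norm{\dot{U}_k}I \succeq 0 \text{ on } \rT_{U_k}\cM,
\end{equation*}
so the step itself certifies approximate optimality at $U_k$ scaled by $\norm{\dot{U}_k}$. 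Third, a \emph{transfer across the retraction}: using the Lipschitz Hessian of the pullback together with the second-order property $\tfrac{d^2}{dt^2}\Retr_{U_k}(t\dot{U}_k)|_{t=0}\perp \rT_{U_k}\cM$, one shows that the Riemannian gradient and Hessian at $U_{k+1}$ differ from the quantities controlled by the subproblem by at most $O(\norm{\dot{U}_k}^2)$ and $O(\norm{\dot{U}_k})$ respectively; hence $\norm{\grad f(U_{k+1})} = O(\norm{\dot{U}_k}^2)$ and $\lambda_{\min}(\Hess f(U_{k+1})) \ge -O(\norm{\dot{U}_k})$. Fourth, \emph{iteration counting}: telescoping the decrease bound against $\min_{U\in\cM}f(U) > -\infty$ gives $\sum_k \norm{\dot{U}_k}^3 < \infty$, so after $T = O((f(U_0)-f^*)/(L\epsilon^{3/2}))$ iterations some $\dot{U}_k$ satisfies $\norm{\dot{U}_k} \le c_2\sqrt{\epsilon}$; by the transfer step, $U_{k+1}$ is then an $\epsilon$-second-order critical point.

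The main obstacle is the transfer step, because the subproblem's optimality conditions live on $\rT_{U_k}\cM$ while the desired bounds $\norm{\grad f(U_{k+1})}\le \epsilon$ and $\lambda_{\min}(\Hess f(U_{k+1}))\ge -\sqrt{\epsilon}$ live on $\rT_{U_{k+1}}\cM$, a different tangent space. Handling this rigorously requires differentiating $\hat{f}_{U_k}$ at $\dot{U}_k$ rather than at $0$, identifying the image of $D\Retr_{U_k}(\dot{U}_k)$ with $\rT_{U_{k+1}}\cM$, and invoking the orthogonality of the second derivative of the retraction to show that the normal-bundle components contribute no first-order error. Once that identification is in hand, Lipschitz continuity of the pullback Hessian cleanly yields the needed $O(\norm{\dot{U}_k}^2)$ gradient and $O(\norm{\dot{U}_k})$ Hessian residuals, and the rest of the proof is a routine combination with the sufficient decrease bound, exactly as in the Euclidean case treated in \citet{zhang2018cubic,boumal2019global,agarwal2021adaptive}.
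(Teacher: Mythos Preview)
The paper does not supply its own proof of this theorem: immediately before the statement it writes that ``proofs for the following convergence result can be found in \citet{zhang2018cubic,boumal2019global,agarwal2021adaptive}'' and then merely quotes the result in a form convenient for the exposition. Your sketch---reducing to the Euclidean Nesterov--Polyak argument on the pullback $\hat{f}_{U_k}$, using the second-order retraction to identify $\nabla\hat{f}_{U_k}(0)=\grad f(U_k)$ and $\nabla^2\hat{f}_{U_k}(0)=\Hess f(U_k)$, deriving sufficient decrease and subproblem optimality conditions, then transferring to $U_{k+1}$ via Lipschitz continuity of the pullback Hessian and telescoping---is exactly the standard route taken in those references, so there is nothing to compare against in the paper itself; your outline is faithful to the literature it cites.
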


Each iteration of Riemannian cubic-regularized Newton solves an expensive Newton subproblem.
Although it converges in far fewer iterations compared to gradient methods, it is practically competitive only when the added cost of solving the Newton subproblem can be offset by the corresponding reduction in iteration count.

\section{Formulation and solution of $K$-means as manifold optimization}\label{sec:formulation}
We now explain how we solve (\ref{eq:manif})
using a Riemannian optimization approach. As a first attempt, we can
indeed verify that the the constraint set in (\ref{eq:manif}), written
\begin{equation}\label{eqn:our_Kmeans_manifold}
    \cM\coloneqq\cM_r=\mleft\{U\in\bR^{n\times r}:UU^\top \vone_n=\vone_n,\,\tr(UU^\top)=K\mright\},
\end{equation}
is a manifold by checking that (\ref{eq:manlicq}) holds (cf. Lemma~\ref{lem:LICQ_submanifold_Kmeans} in the appendix). In fact,
directly applying Riemannian optimization techniques results in a
$K$-means algorithm very similar to the one proposed in \citet{CarsonMixonVillarWard_manifold-Kmeans}.
The immediate and critical difficulty with this approach is the lack
of an efficient retraction operator, which must be called at every
iteration to keep iterates feasible $U\in\cM$. For example, \citet{CarsonMixonVillarWard_manifold-Kmeans}
used a complicated exponential retraction that costs $O(n^{2})$
time, hence bottlenecking the entire algorithm and preventing it from
scaling to large $n$.

Instead, our first contribution in this paper is to reformulate (\ref{eq:manif}) by establishing a submersion from the product manifold $\PMan=\cV\times\Orth(r)$ to $\cM$, where 
\begin{align*}
\cV&=\mleft\{V\in\bR^{n\times(r-1)}:\vone_n^\top V=0,\;\tr(VV^\top )=K-1\mright\}, \\
\Orth(r)&=\mleft\{Q\in\bR^{r\times r}:QQ^\top =I_{r}\mright\}.
\end{align*}
In words, $\cV$ is a projected hypersphere and $\Orth(r)$ is the
set of $r\times r$ orthonormal matrices. 
\begin{thm}\label{prop:factor}
Let $\varphi(V,Q)\coloneqq\hat{V}Q$, where $\hat{V}\coloneqq \begin{bmatrix}\hat{\vone}_n & V\end{bmatrix}$ with $\hat{\vone}_n\coloneqq(1/\sqrt{n})\vone_n$. Then $\cM=\varphi(\PMan)$. Moreover, the Jacobian $\Diff\varphi\colon\rT\PMan\to\rT\cM$ is surjective for all $(V,Q)\in\PMan$, i.e., $\varphi$ is a submersion.
\end{thm}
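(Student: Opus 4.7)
The plan is to verify $\cM=\varphi(\PMan)$ by proving the two set inclusions, then to establish surjectivity of $\Diff\varphi$ by explicitly inverting it. The forward inclusion $\varphi(\PMan)\subseteq\cM$ reduces to a direct computation: the factorization $\varphi(V,Q)\varphi(V,Q)^\top = [\hat{\vone}_n \; V]QQ^\top[\hat{\vone}_n \; V]^\top = \hat{\vone}_n\hat{\vone}_n^\top + VV^\top$ yields $UU^\top\vone_n = \vone_n + V(V^\top\vone_n) = \vone_n$ and $\tr(UU^\top) = 1 + (K-1) = K$, using $\vone_n^\top V = 0$ and $\tr(VV^\top) = K-1$ from the definition of $\cV$.

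For the reverse inclusion, the key observation is that for any $U\in\cM$, the vector $q\coloneqq U^\top\hat{\vone}_n\in\bR^r$ has unit norm, since $\|q\|^2 = \hat{\vone}_n^\top UU^\top\hat{\vone}_n = \hat{\vone}_n^\top\hat{\vone}_n = 1$. I would then extend $q$ to an orthonormal basis $\{q,q_2,\dots,q_r\}$ of $\bR^r$, set $Q\coloneqq[q\;q_2\;\cdots\;q_r]^\top\in\Orth(r)$ and $V\coloneqq U[q_2\;\cdots\;q_r]$, so that $UQ^\top=[\hat{\vone}_n\;V]$. Verifying $V\in\cV$ reduces to the identities $\vone_n^\top Uq_j = \sqrt{n}\,q^\top q_j = 0$ for $j\ge 2$ and $\tr(VV^\top) = \tr(U^\top U) - \|Uq\|^2 = K-1$.

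For the submersion property, I would parametrize $\dot Q = \tilde\Omega Q$ with $\tilde\Omega = \bigl(\begin{smallmatrix} 0 & -a^\top \\ a & S \end{smallmatrix}\bigr)$ skew-symmetric (so $a\in\bR^{r-1}$ and $S$ is $(r-1)\times(r-1)$ skew), and for each $\dot U\in\rT_U\cM$ define $\dot W\coloneqq\dot U Q^\top = [w_1\;\dot W_2]$. The target equation $\Diff\varphi[\dot V,\dot Q] = \dot U$ then becomes $[\hat{\vone}_n\;V]\tilde\Omega+[0\;\dot V] = \dot W$, which decomposes columnwise into $w_1 = Va$ and $\dot V = \dot W_2 + \hat{\vone}_n a^\top - VS$, with $\dot V$ still required to satisfy $\vone_n^\top\dot V = 0$ and $\tr(V\dot V^\top) = 0$. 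The crucial step translates the two tangent constraints on $\dot U$, after using $\vone_n^\top V = 0$, into
\begin{equation*}
\vone_n^\top w_1 = 0,\qquad w_1 = -\tfrac{1}{\sqrt n}V\dot W_2^\top\vone_n,\qquad \tr(V\dot W_2^\top) = 0.
\end{equation*}
These identities certify that $w_1$ lies in the column space of $V$, so the choices $a\coloneqq-\tfrac{1}{\sqrt n}\dot W_2^\top\vone_n$, $S\coloneqq 0$, and $\dot V\coloneqq\dot W_2+\hat{\vone}_n a^\top$ solve the system; the remaining constraints on $\dot V$ then follow by direct expansion using $\hat{\vone}_n^\top V = 0$.

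The hard part will be the submersion step, specifically the derivation of $w_1 = -\tfrac{1}{\sqrt n}V\dot W_2^\top\vone_n$. Without this identity the equation $w_1 = Va$ need not be solvable for $a\in\bR^{r-1}$, and $\varphi$ would fail to be a submersion. The nontrivial content of the theorem is thus to extract exactly enough structure from the two tangent constraints on $\dot U$ to force $w_1$ into the column space of $V$.
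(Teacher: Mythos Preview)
Your proof is correct, but the submersion argument takes a genuinely different route from the paper's. The paper does not invert $\Diff\varphi$ explicitly; instead it restricts $\Diff\varphi$ to the subspace
\[
S=\Bigl\{(\dot V,\dot Q)\in\rT_{(V,Q)}\PMan:\dot Q Q^\top=\begin{bmatrix}0&-h^\top\\ h&0\end{bmatrix}\Bigr\},
\]
shows by a norm estimate that $\Diff\varphi\vert_S$ is injective, and then counts $\dim S=n(r-1)-1=\dim\rT_U\cM$ to conclude surjectivity. Your argument is constructive: you translate the tangent constraints on $\dot U$ into the identities $\vone_n^\top w_1=0$, $w_1=-\tfrac{1}{\sqrt n}V\dot W_2^\top\vone_n$, $\tr(V\dot W_2^\top)=0$, and read off an explicit preimage $(\dot V,\dot Q)$ with $S=0$. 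Your approach yields an actual right inverse of $\Diff\varphi$ (useful if one later wants to lift tangent vectors), whereas the paper's dimension count avoids the algebra of the constraint translation and delivers, as a byproduct, a quantitative lower bound $\|\Diff\varphi[\dot V,\dot Q]\|^2\ge 2^{-1/2}\|(\dot V,\dot Q)\|^2$ on $S$. Your reverse inclusion $\cM\subseteq\varphi(\PMan)$ via extending $q=U^\top\hat{\vone}_n$ to an orthonormal basis is also slightly different from---and arguably more elementary than---the paper's SVD-based construction, though the two are essentially equivalent.
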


Having established the submersion property of $\varphi$, it is a
standard result that every $\epsilon$-second order point of $\PMan$
is also an $c\epsilon$-second order point on $\cM$ for some constant
rescaling factor $c$; see e.g.\ Example 3.14 and the surrounding text
in \citet{Levin2025}. Therefore, to solve (\ref{eq:manif}), we equivalently solve
\begin{align}
\min_{(V,Q)\in\PMan} & \inner{C}{VV^\top }-\mu\sum_{i,j}\log\bigl(\varphi_{i,j}(V,Q)\bigr)_{+},\label{eq:manif2}
\end{align}
where $C=-XX^\top $ is the (negative) data Gram matrix, and $\varphi_{i,j}$
is the $(i,j)$-th element of the operator $\varphi$ in \Cref{prop:factor}. 
A basic but critical benefit of the reformulation (\ref{eq:manif2})
is that the product manifold $\PMan$ admits a simple second-order
retraction via its Euclidean projection \citep[Sec.~5.12]{boumal2023intromanifolds}
\[
\Retr_{(V,Q)}(\dot{V},\dot{Q})=\begin{bmatrix}
\Proj_{\cV}(V+\dot{V}) & \Proj_{\Orth(r)}(Q+\dot{Q})
\end{bmatrix},
\]
where \[\Proj_{\cV}(V)=\sqrt{K-1}\frac{V-n^{-1}\vone_n\vone_n^{\top}V}{\norm{V-n^{-1}\vone_n\vone_n^{\top}V}}\quad\text{and}\quad\Proj_{\Orth(r)}(Q)=(QQ^{\top})^{-1/2}Q.\]
It is easy to check that the retraction above costs just $O(nr+r^{3})$ time to evaluate. In \Cref{subsec:deriv}, we give explicit expressions for
the Riemannian gradient and Hessian and explain how they can be computed
in $O(nr+r^{3})$ time. 

The appearance of the logarithmic penalty in (\ref{eq:manif2}) presents two difficulties. First, as a practical concern, any algorithm for(\ref{eq:manif2}) must begin at a \emph{strictly feasible} point $(V_{0},Q_{0})\in\PMan$ that additionally satisfies $\varphi(V_{0},Q_{0})>0$. In \Cref{subsec:feas}, we provide a good strictly feasible initial point, and prove that points exist only if the search rank is over-parameterized as $r>K$. Second, some special care is needed to rigorously apply the guarantees from \Cref{subsec:Riemann}, given that the penalty $\varphi(V,Q)$ is Lipschitz only when restricted to a closed and strictly feasible subset; see \Cref{appsec:Lipschitz} for details.

Together, these ingredients allow us to apply Riemannian gradient descent \citep{boumal2019global} to (\ref{eq:manif2}) to compute an $\epsilon$-first-order critical point in $(n/\epsilon)\cdot\poly(r,d,K)$
time. In practice, the algorithm often converges to an $\epsilon$-second-order critical point, though this is not rigorously guaranteed without a carefully-tuned noise perturbation. Alternatively, we can apply the
conjugate-gradients (CG) variant of the Riemannian trust-region algorithm (RTR), a general-purpose solver available in packages like \texttt{MANOPT}~\citep{manopt} or \texttt{PYMANOPT}~\citep{Pymanopt}, to guarantee convergence to an $\epsilon$-second-order critical point. Unfortunately, in our experiments, we observed that all of these algorithms experience unsatisfactorily slow convergence, due to the severe ill-conditioning introduced by the logarithmic penalty.

Instead, our best numerical results were obtained by the Riemannian cubic-regularized Newton (\Cref{thm:rcr}). 
Our key insight is that the algorithm can be implemented with just $O(nr^3)$ time per-iteration, by exploiting the underlying block-diagonal-plus-low-rank structure of the Riemannian Hessian. To explain, our core difficulty is to efficiently solve the Newton subproblem
\begin{equation}\label{eqn:newton_cube}
\min_{Ap=0}g^\top p+\frac{1}{2}p^\top Hp+\frac{L}{6}\norm{p}^{3},
\end{equation}
where $g$ and $H$ denote the vectorized Riemannian gradient and Hessian respectively, and $A$ implements the tangent space constraint $(\dot{V},\dot{Q})\in\rT_{(V,Q)}\cM$. We can verify that the subproblem contains $n(r-1)+r^{2}=O(nr)$ variables and is subject to $m=r+r(r+1)/2=O(r^{2})$ constraints.
Given that the subproblem has only linear constraints, its local minima must always satisfy the first- and second-optimality conditions (\ref{eq:FOC}) and (\ref{eq:SOC}), which read
\begin{align*}
\begin{bmatrix}H+\lambda I & A^\top \\
A & 0
\end{bmatrix}\begin{bmatrix}p\\
q
\end{bmatrix}&=\begin{bmatrix}-g\\
0
\end{bmatrix},\quad
\lambda=\frac{L}{2}\norm{p},\quad
\xi^\top (H+\lambda I)\xi\ge0\text{ for all }\xi\text{ satisfying }A\xi=0.
\end{align*}
The following result can be viewed as a Riemannian extension of known results on the approximate minimization of cubic-regularized subproblems. It shows that, with sufficient regularization $L$, the global minimum corresponds to the unique second-order critical point.
\begin{lem}\label{lem:p_lambda}
Let $A$ have full row-rank (i.e.\ $AA^\top \succ0$) and let $\lambda_{\min}=\min_{\norm{\xi}=1,A\xi=0}\xi^\top H\xi$. For $\lambda>-\lambda_{\min}$, the parameterized solution
\[
p(\lambda)=\begin{bmatrix}I\\
0
\end{bmatrix}^\top \begin{bmatrix}H+\lambda I & A^\top \\
A & 0
\end{bmatrix}^{-1}\begin{bmatrix}-g\\
0
\end{bmatrix}
\]
is well-defined and $\norm{p(\lambda)}$ is monotonously decreasing with respect to $\lambda$.
\end{lem}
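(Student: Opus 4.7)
} The plan is to reduce the KKT system to a smaller unconstrained system by parameterizing the nullspace of $A$, and then to read off both well-definedness and monotonicity from a spectral decomposition. Let $Z\in\bR^{N\times(N-m)}$ be a matrix whose columns form an orthonormal basis for $\ker A$, where $N$ is the ambient dimension of $p$. Since $A$ has full row rank, such $Z$ exists and satisfies $AZ=0$ and $Z^\top Z=I$. Define $\tilde H = Z^\top H Z$ and $\tilde g = Z^\top g$, and note that by the Courant--Fischer characterization, the quantity $\lambda_{\min}$ in the statement equals the smallest eigenvalue of $\tilde H$.

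The first step is to show well-definedness. The pair $(p,q)$ solving the KKT system must satisfy $Ap=0$, so $p=Zu$ for a unique $u\in\bR^{N-m}$. Substituting into the first block equation and left-multiplying by $Z^\top$ (which kills the $A^\top q$ term since $Z^\top A^\top=0$) yields the reduced system
\begin{equation*}
(\tilde H+\lambda I)\,u = -\tilde g.
\end{equation*}
For $\lambda>-\lambda_{\min}$, the matrix $\tilde H+\lambda I$ is positive definite, so this $u$ exists and is unique; the multiplier $q$ is then recovered uniquely from $A^\top q=-g-(H+\lambda I)Zu$ using that $A$ has full row rank (so $A^\top$ is injective on $\operatorname{range}(A^\top)$, and the right-hand side lies in this range by construction). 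Equivalently, the KKT matrix is invertible, so $p(\lambda)=-Z(\tilde H+\lambda I)^{-1}\tilde g$ is well-defined.

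Next I would establish monotonicity by a direct computation. Using $Z^\top Z=I$,
\begin{equation*}
\norm{p(\lambda)}^{2} = \tilde g^\top (\tilde H+\lambda I)^{-2} \tilde g.
\end{equation*}
Diagonalize $\tilde H = \sum_i \mu_i v_i v_i^\top$ with $\mu_i\ge \lambda_{\min}$, and write $\tilde g = \sum_i \alpha_i v_i$. Then
\begin{equation*}
\norm{p(\lambda)}^{2} = \sum_{i}\frac{\alpha_i^{2}}{(\mu_i+\lambda)^{2}}.
\end{equation*}
For $\lambda>-\lambda_{\min}$, every denominator $(\mu_i+\lambda)^2$ is positive and strictly increasing in $\lambda$, so each summand is nonincreasing in $\lambda$, and strictly decreasing whenever $\alpha_i\neq 0$. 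Hence $\norm{p(\lambda)}$ is monotonically nonincreasing, and strictly decreasing except in the degenerate case $\tilde g=0$ (i.e.~$g\in\operatorname{range}(A^\top)$), where $p(\lambda)\equiv 0$.

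The only mild obstacle is bookkeeping the nullspace reduction cleanly and verifying that the eigenvalues of $\tilde H=Z^\top HZ$ really do coincide with the constrained Rayleigh quotient values defining $\lambda_{\min}$; both are standard facts, so no real difficulty arises beyond the substitution above.
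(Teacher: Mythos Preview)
Your proposal is correct and follows essentially the same approach as the paper: both reduce the KKT system via an orthonormal nullspace basis $Z$ (the paper calls it $N$), obtain $p(\lambda)=-Z(Z^\top HZ+\lambda I)^{-1}Z^\top g$, and infer monotonicity of $\norm{p(\lambda)}$ from the fact that the eigenvalues of $Z^\top HZ+\lambda I$ are positive and increasing. Your version is slightly more detailed (explicit spectral sum, recovery of $q$, and the degenerate case $\tilde g=0$), but there is no substantive difference.
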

The same lemma also suggests solving the Newton subproblem by simple
bisection search, cf.~\citep[Sec.~6.1]{Cartis2011}. Indeed, the solution is just $p(\lambda_{\text{opt}})$,
where $\lambda_{\text{opt}}$ is the solution to the \emph{monotone}
equation $2\lambda=L\norm{p(\lambda)}$ (via \Cref{lem:p_lambda}).
Thus, we pick a very small $\lambda_{\text{lb}}\approx-\lambda_{\min}$
such that $\norm{p(\lambda_{\text{lb}})}>2\lambda_{\text{lb}}/L$,
a very large $\lambda_{\text{ub}}$ such that $2\lambda_{\text{ub}}/L>\norm{p(\lambda_{\text{ub}})}$,
and then perform bisection until $2\lambda_{\text{opt}}=L\norm{p(\lambda_{\text{opt}})}$
is approximately found. For each $\lambda$, if $2\lambda<L\norm{p(\lambda)}$,
then we increase $\lambda$; otherwise, we decrease $\lambda$.
 
The main cost of the bisection search is the computation of $p(\lambda)$, which naively costs $O(n^3r^3)$ time. For our specific problem, we explain in \Cref{appsec:proof_bisect} how a block-diagonal-plus-low-rank structure in the Hessian $H$ reduces the computation cost to just $n\cdot\poly(r,d)$ time. 
Applying \Cref{thm:rcr} shows that the overall method computes an $\epsilon$-second-order
critical point in $(n/\epsilon^{1.5})\cdot\poly(r,d,K)$ time.

\section{Numerical results}
\label{sec:numerics}
In this section, we showcase the superior performance of our proposed Riemannian second-order method for clustering on both synthetic Gaussian mixture models (GMM) and real-world mass cytometry (CyTOF) datasets. Compared to existing state-of-the-art methods, such as the nonnegative low-rank (NLR) factorization \citep{ZhuangChenYangZhang2024_ICLR} and prior Riemannian $K$-means algorithms \citep{CarsonMixonVillarWard_manifold-Kmeans}, our approach achieves faster convergence, higher clustering accuracy, and more reliable recovery of ground-truth cluster memberships. 
These results highlight the convergence and accuracy advantages of second-order methods when they can be implemented with per-iteration costs of just $O(n)$ time.  
The implementation details are deferred to \Cref{appsec:RTR_algo}. 

\paragraph{Datasets.} We conducted experiments on both synthetic and real datasets. The synthetic data was generated from a standard $K$-component, $d$-dimensional Gaussian mixture model (GMM), with centroids placed at simplex vertices such that their separation equals $\gamma\overline{\Theta}^2$, where $\overline{\Theta}$ is the information-theoretic threshold for exact recovery in~(\ref{eqn:recovery_threshold}), and $\gamma$ controls separation. The real dataset came from mass cytometry (CyTOF) \citep{LEVINE2015184, CyTOFClean}. It consists of 265,627 cell protein expression profiles across 32 markers, labeled into 14 gated cell populations. Following~\citet{ZhuangChenYangZhang2024_ICLR}, we uniformly sample 1,800 cells from $K = 4$ unbalanced clusters (labels 2, 7, 8, 9) from individual 1 for our experiment.

\paragraph{Global optimality at second-order critical points (validation of Assumption~\ref{asm:bengin}).} Figure~\ref{fig:2opt} shows the convergence behaviors of loss function~(\ref{eq:manif2}) for GMMs ($n=500, \gamma=1.2,\mu=0.01$) with $50$ randomized initializations. We consistently observe that: (i) the loss value steadily decreases over iterations and converges rapidly near the globally optimal point; (ii) the Riemannian gradient norm dynamics suggest that our algorithm initially attempts to escape saddle points (with increased gradient norm) and eventually converges to second-order local optimality, where zero-loss is achieved, indicating global optimality. To verify second-order local optimality, we also plot the minimum eigenvalue of the Riemannian Hessian. This provides strong numerical evidence that near-second-order critical points are near-globally optimal, as posited by Assumption~\ref{asm:bengin}.

\begin{figure}[!htbp]
    \centering
    \includegraphics[width=\textwidth]{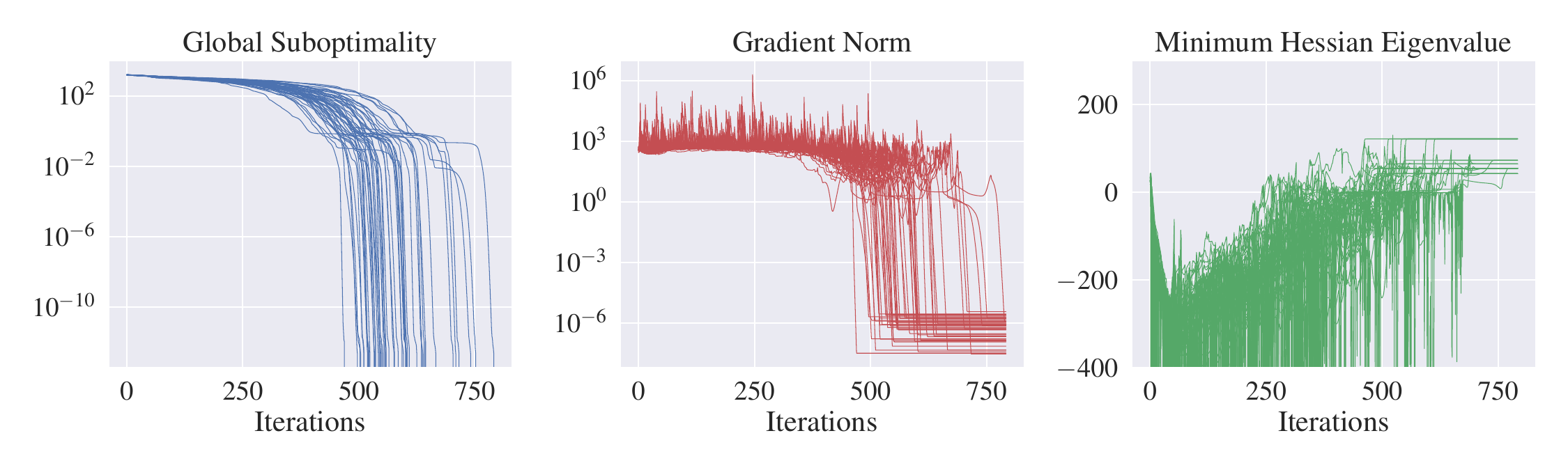}
    \caption{\textbf{Local convergence to second-order critical points yields global optimality.} In the GMM setting, where ground-truth partitions can be planted, we consistently observe local convergence to the global optimum, yielding zero clustering error. This provides strong numerical evidence that near-second-order critical points are near-globally optimal, as hypothesized in Assumption~\ref{asm:bengin}.} \label{fig:2opt}
\end{figure}

\paragraph{Benchmark on real world data.} 
Prior studies on mass cytometry (CyTOF) and computer vision (CIFAR‑10) datasets identified the nonnegative low-rank (NLR) factorization \citep{ZhuangChenYangZhang2024_ICLR} as the most reliable clustering solver, attaining the lowest average mis-clustering error and the tightest variance compared to classical baselines such spectral clustering (SC), nonnegative matrix factorization (NMF), and $K$-means++~\citep{kmeans++} ($K$M++). Our algorithm optimizes the same nonnegative low‑rank model, so it inherits this reliability. Because it applies second‑order Hessian updates rather than first‑order gradients, it refines each iterate more thoroughly and therefore recovers the ground‑truth membership matrix more accurately. \Cref{fig:cytof_perf} illustrates this on CyTOF: both methods keep mis-clustering near zero, yet our solver achieves a smaller Frobenius gap to the oracle solution. The experiment was repeated 50 times on random subsamples of size $n=1800$. An additional experiment on the CIFAR10 dataset can be found in \Cref{appsec:add_num}.

\begin{figure}[!htbp]
    \centering
    \includegraphics[width=0.8\textwidth]{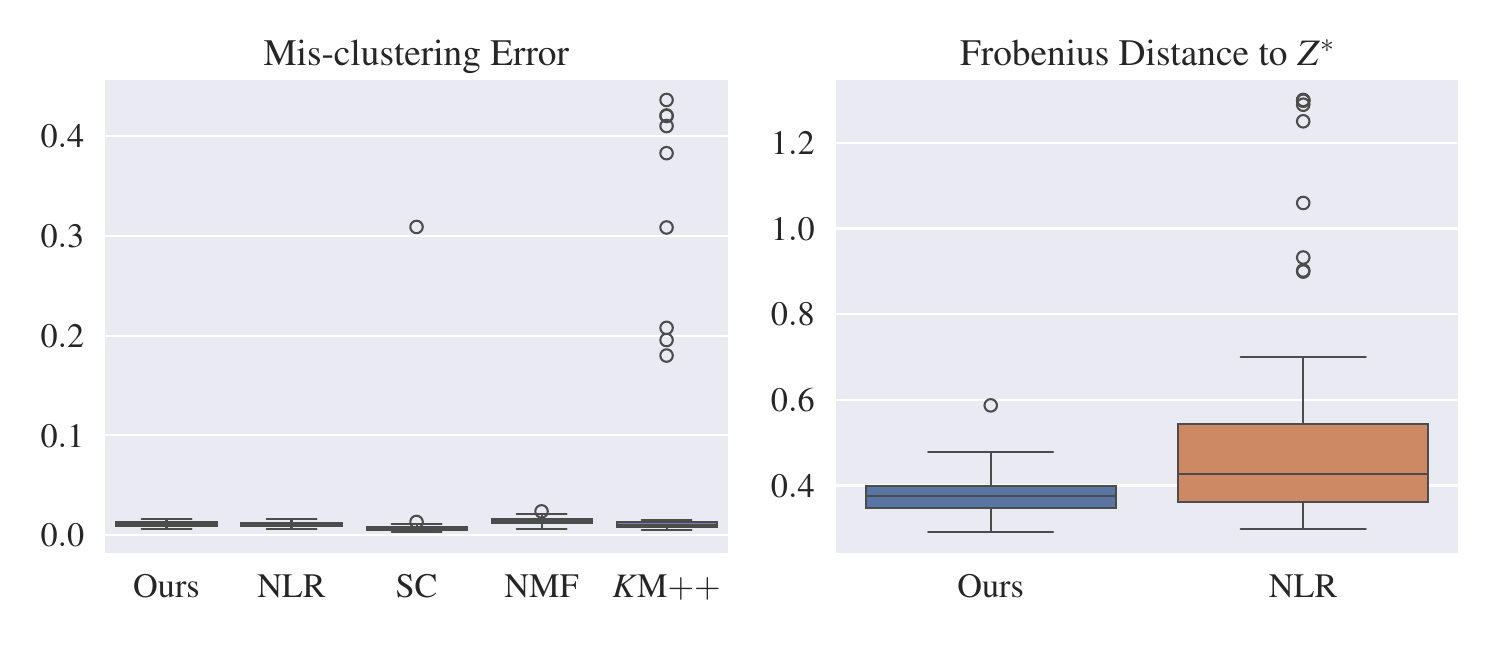}
    \caption{\textbf{Real-world benchmark on CyTOF data.} We compared our method to NLR, the previous state-of-the-art, as well as classical benchmarks SC, NMF, and $K$M++. Our method and NLR achieve the most consistently accurate clustering, with the smallest variance and the fewest outliers (left), but we outperform NLR in ground truth recovery (right).}\label{fig:cytof_perf}
\end{figure}

\paragraph{Comparison with NLR.} Next, we compare our method directly to the nonnegative low-rank (NLR) factorization. \Cref{fig:nlr_vs_tr} shows experimental results for GMM with \(n=100\), \(\gamma=0.8\), \(\mu=0.1\), and with varying $n$.  Main observations: (i) Each Newton step is solved in $O(n)$ time, matching the theory in \Cref{sec:formulation}. (ii) A Newton step is about $25$--$100$ times costlier than a single NLR update. This is to be expected because the Newton step solves several linear systems, while NLR performs only a single matrix-vector product. (iii) Our solver reaches the optimum in hundreds of iterations, whereas NLR needs tens of thousands. The orders‑of‑magnitude reduction in iterations more than offsets the costlier step, so wall‑clock time drops by a factor of two to four.

\begin{figure}[!htbp]
    \centering
    \includegraphics[width=\textwidth]{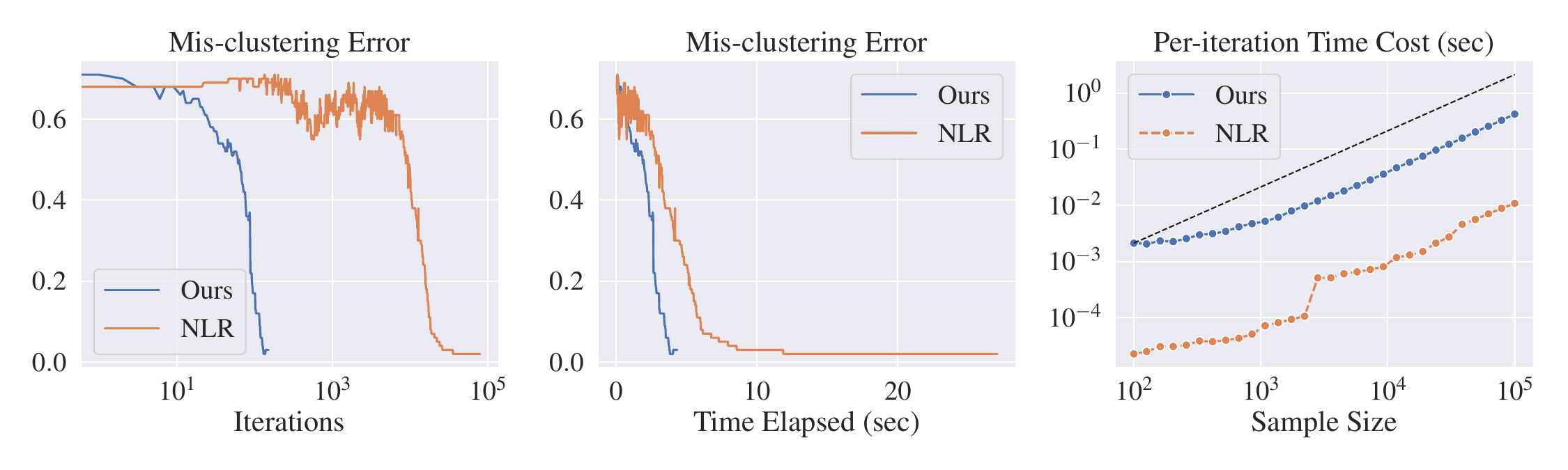}
    \caption{\textbf{Comparison with previous state-of-the-art NLR on GMM.}  
    Our second-order method reaches optimality in 152 iterations, while NLR needs 80k. Even though each second-order iteration costs $\approx$ 25--100 NLR steps, the total runtime is still two to four times shorter.
    (Left and middle) clustering accuracy vs log iterations and linear time. (Right) per-iteration time vs sample size $n$.}\label{fig:nlr_vs_tr}
\end{figure}

\begin{figure}[!htbp]
    \centering
    \includegraphics[width=\textwidth]{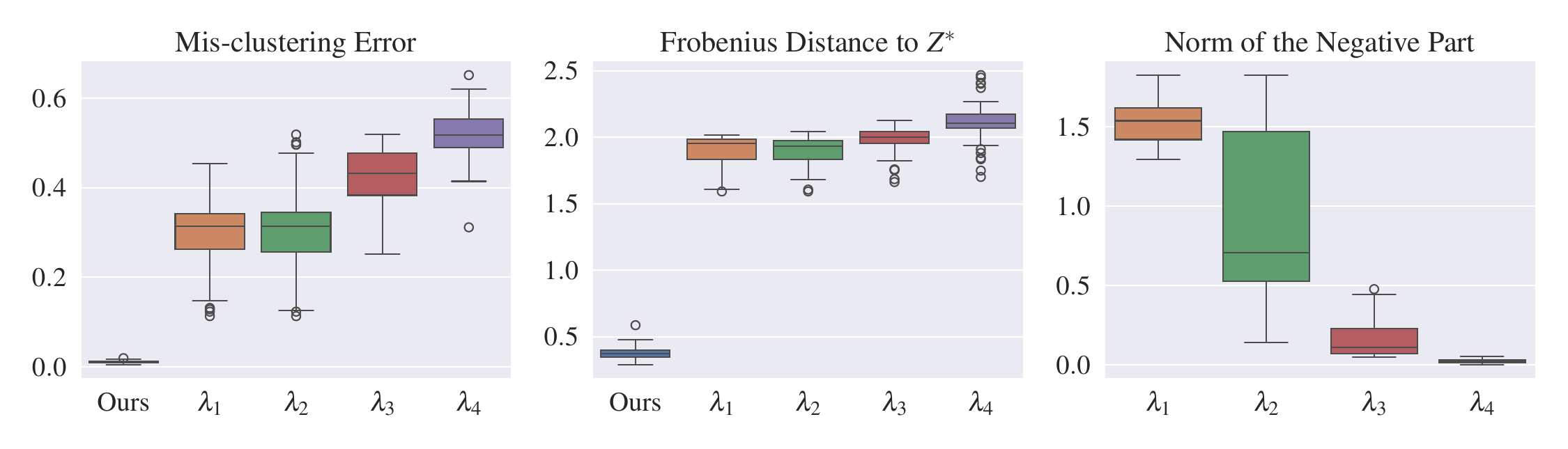}
    \caption{\textbf{Comparison with prior Riemannian $K$-means method of \citet{CarsonMixonVillarWard_manifold-Kmeans} on real-world data.} Each run is warm‑started from the previous and the penalty is stepped through $\lambda_i=0,10^4, 10^6,10^7$. However: (Left) average mis‑clustering exceeds 30\%; (Middle) the recovery error $\norm{Z-Z^\star}_F$ remains large; (Right) the infeasibility $\norm{U_-}$ never vanishes. Our Riemannian method, shown for reference, enforces $U_-=0$ by design and achieves near-zero error in both metrics.}\label{fig:cytof_fm}
\end{figure}

\paragraph{Comparison with prior Riemannian $K$-means methods.} We evaluate the clustering performance of the algorithm proposed by~\citet{CarsonMixonVillarWard_manifold-Kmeans} to solve the penalized formulation~(\ref{eqn:Carson_manifold}) on the CyTOF data. Figure~\ref{fig:cytof_fm} presents the performance of this first-order manifold method. Unfortunately, we were unable to identify a sequence of $\lambda$ that would produce acceptable clustering results. While increasing the penalty parameter $\lambda$ improves feasibility, it also degrades clustering performance. This highlights the difficulty in solving~(\ref{eqn:Carson_manifold}) using the method of~\citet{CarsonMixonVillarWard_manifold-Kmeans}, as it struggles to balance strict constraint satisfaction with objective optimality in the $K$-means problem.

\begin{figure}[!hb]
    \centering
    \includegraphics[width=0.8\textwidth]{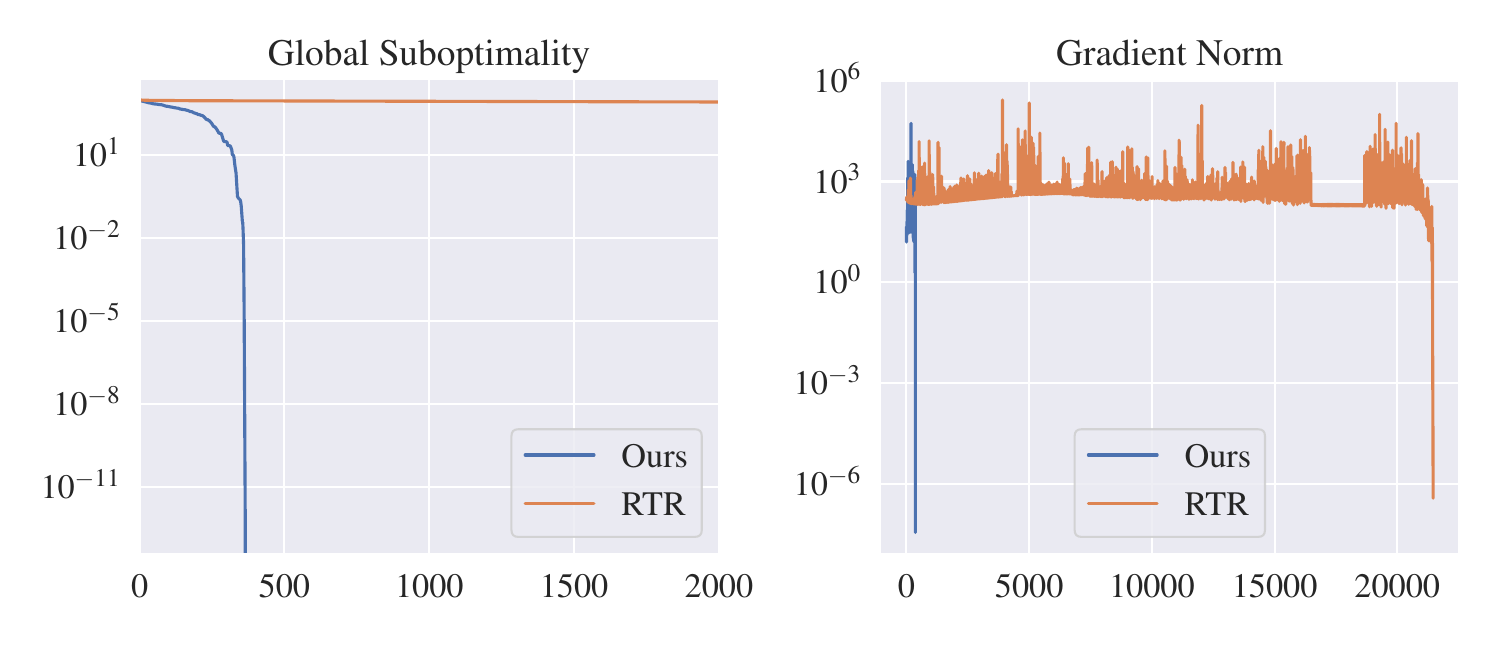}
    \caption{\textbf{Comparison with classical Riemannian Trust Region (RTR) on GMM.} Our method drives both loss and gradient norm to machine precision in around 360 iterations. In contrast, RTR stagnates for over 21k iterations due to the extreme ill-conditioning induced by the log penalty.}\label{fig:vs_rtr}
\end{figure}

\paragraph{Comparison with classical Riemannian algorithms.}
As discussed in \Cref{sec:formulation}, Problem (\ref{eq:manif2}) can be solved with CG or RTR as the gradient, Hessian, and retraction are all available. Nevertheless, both CG and RTR perform poorly because the log-barrier induces an extremely ill-conditioned landscape. To illustrate this, we solve (\ref{eq:manif2}) on GMM data using \texttt{PYMANOPT}'s implementation of CG and RTR. For CG, we were unable to tune the method to produce a meaningful solution, as its updates frequently lead to infeasible points. While RTR can converge to a solution comparable to ours, it requires significantly more iterations and time, as shown in \Cref{fig:vs_rtr}.
\begin{figure}[!htbp]
    \centering
    \includegraphics[width=0.8\textwidth]{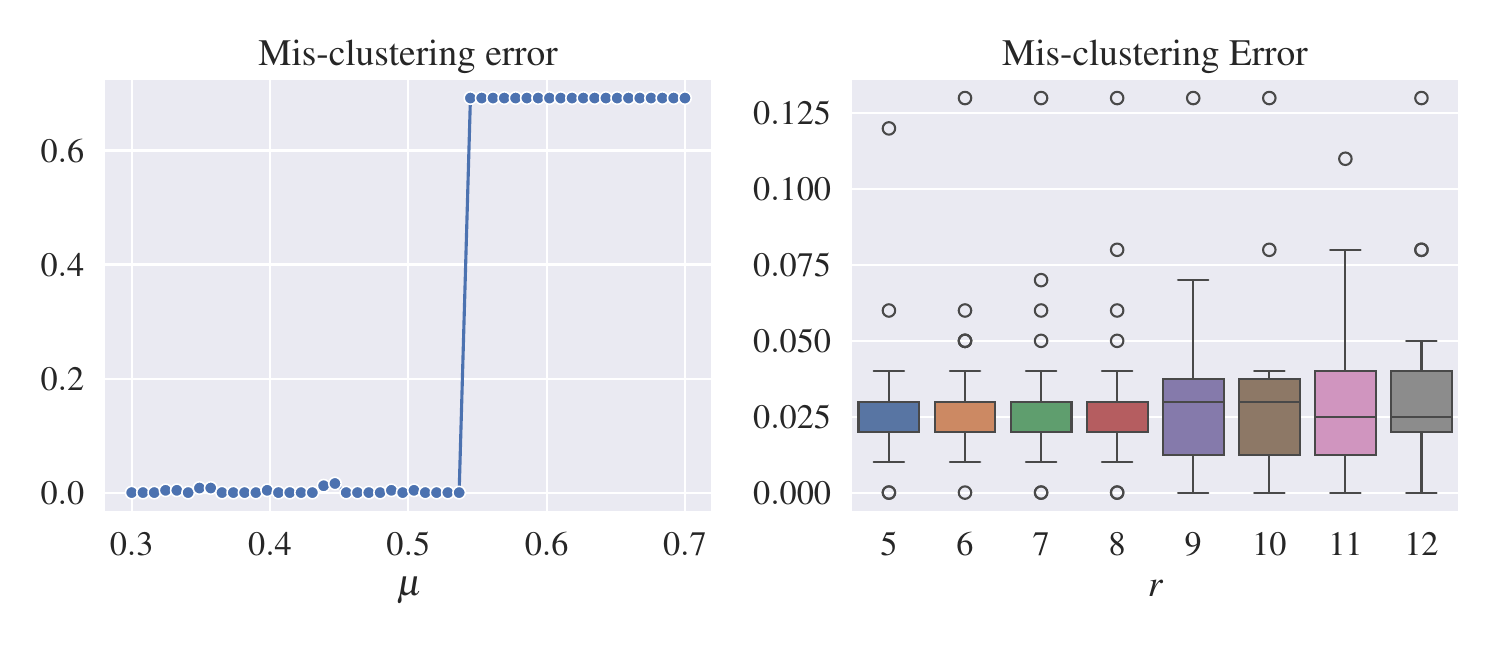}
    \caption{\textbf{Dependence on hyperparameters.} (Left) Error exhibits phase transition as penalty parameter $\mu$ gets too large. (Middle): Errors are insensitive to the search rank $r$, when $\mu$ is chosen appropriately. (Right): When $\mu$ is too large and the algorithm becomes trapped in local minima, increasing $r$ can leads to significantly better results.}\label{fig:err_rate_hyper}
\end{figure}

\begin{figure}[!htbp]
    \centering
    \includegraphics[width=\textwidth]{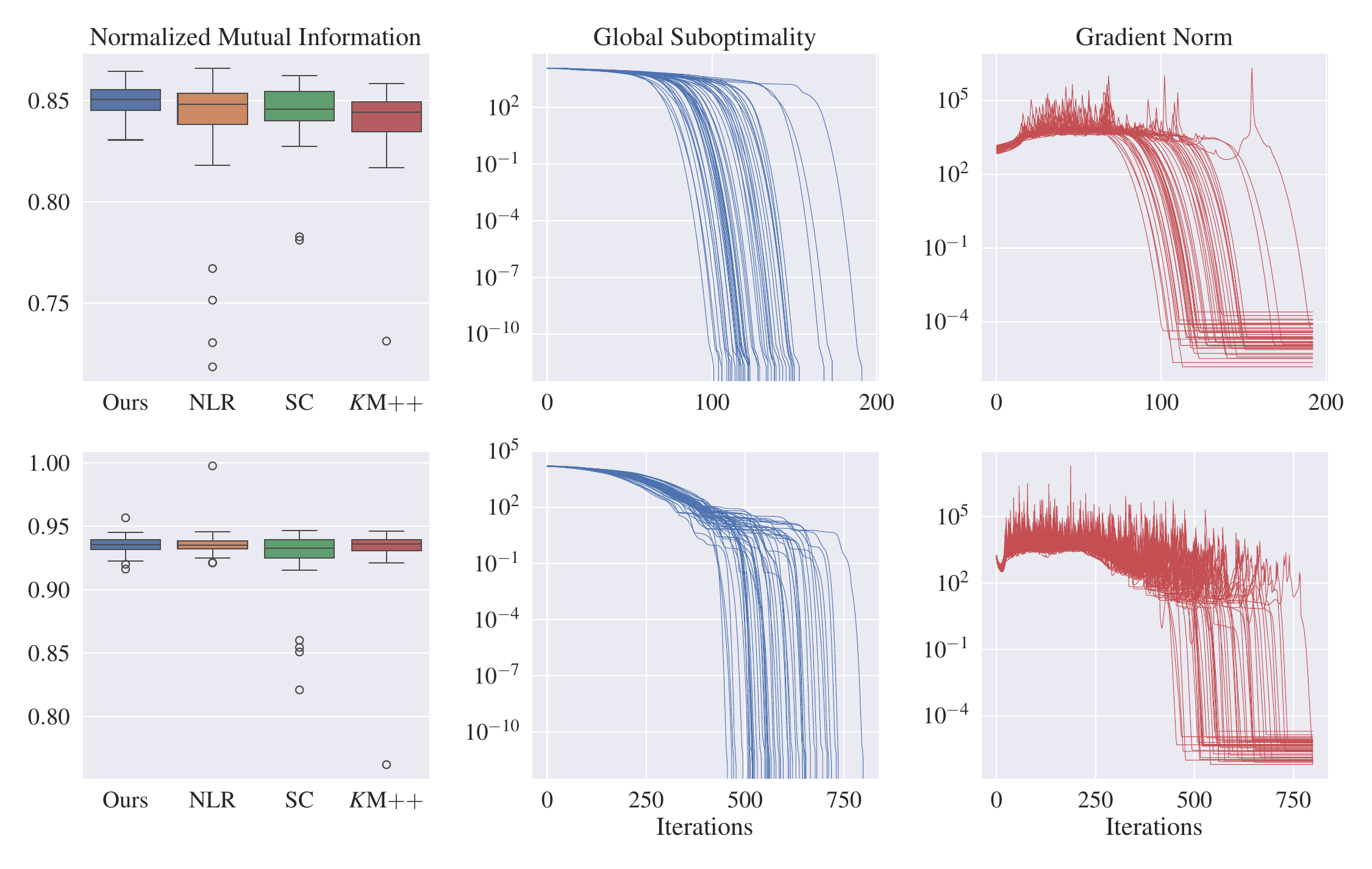}
    \caption{\textbf{Robustness to mis-specification.} Clustering performance and convergence behavior when the number of clusters is under-estimated (top) and over-estimated (bottom) \label{fig:mis_K}}
\end{figure}

\paragraph{Effect of hyperparameters.} Our method displays a sharp phase transition with respect to the regularization parameter $\mu$. Below a critical threshold, the algorithm consistently converges to optimal solutions and remains robust to variations in $\mu$. However, once $\mu$ exceeds this threshold, the method fails to yield meaningful results. This behavior is illustrated in~\Cref{fig:err_rate_hyper} where we used GMM synthetic data with four clusters and separation $\gamma=0.8$. \Cref{fig:err_rate_hyper} also shows the clustering errors evaluated across different search ranks $r$. Clustering performance is largely insensitive to $r$ when $\mu$ is small, so the smallest feasible $r=K+1$ is recommended in practice. However, when $\mu$ is large enough to trigger the the phase transition, increasing $r$ can help the algorithm escape spurious regions. A brief discussion on the choice of hyperparameters is provided in \Cref{appsec:add_num}.

\paragraph{Robustness to mis-specified cluster number.}
\Cref{fig:mis_K} shows the performance of our method under mis-specified cluster numbers in the GMM setting with $K=4, n=1000$, and $\gamma=0.8$; The mis-specified cluster number is set to be $K_\text{mis}=3$ and $K_\text{mis}=5$. In both setups, we observed that our method exhibits strong robustness in statistical accuracy for label recovery. We used normalized mutual information (NMI) to quantify the agreement between true and recovered labels, as the two sets of labels have different categories. Moreover, we observed that our method locally converges to the global optima in a similar manner as the $K=4$ case (cf.~\Cref{fig:2opt}).

\section*{Acknowledgments}
X.~Chen was partially supported by NSF DMS-2413404 and a gift from the Simons Foundation. R.~Zhang was partially supported by NSF CAREER Award ECCS-2047462 and ONR Award N00014-24-1-2671.

\section*{Reproducibility Statement}
The experimental setups are described in detail in \Cref{sec:numerics} and \Cref{appsec:add_num}. While the provided code does not encompass every experiment reported in the paper, all of them can be readily reproduced based on the descriptions.

\section*{LLM Usage Disclosure}
A large language model (LLM) was used solely to polish the writing and improve clarity of expression. No part of the research ideation, discovery, or substantive content was generated by the LLM.

\newpage
\printbibliography

\newpage
\begin{appendices}
\section{Relationship between SDP Formulations of \texorpdfstring{$K$}{K}-means Clustering: Standard and Interior Point Versions (\ref{eq:sdp})}
\label{app:SDP_formulation_equiv}

The standard $K$-means clustering for data in \(\bR^d\) has two formulations in the literature: (i) \emph{centroid-based} optimization \[\min_{\beta_1,\dots,\beta_K \in \mathbb{R}^d} \sum_{i=1}^n \min_{k \in [K]}\norm{X_i - \beta_k}_2^2;\] and (ii) \emph{partition-based} optimization \[\min_{\bigsqcup_{k=1}^K G_k = [n]} \sum_{k=1}^K \sum_{i \in G_k}\norm{X_i - \bar{X}_k}_2^2,\] where \(\bar{X}_k =\abs{G_k}^{-1}\sum_{j \in G_k} X_j\)
is the empirical centroid of cluster \(G_k\). Formulations (i) and (ii) are known to be equivalent, cf.~\citet[Eqn.~(1)]{ZhuangChenYang2022_NeurIPS} or \citet[Appx.~A]{9585110}. Using the parallelogram law in \citet[Eqn.~(5)]{ZhuangChenYang2022_NeurIPS}
\[\sum_{i,j \in G_k}\norm{X_i - X_j}_2^2 = 2\abs{G_k}\sum_{i \in G_k}\norm{X_i - \bar{X}_k}_2^2,\]
we may write the partition-based objective function as \[\min_{\bigsqcup_{k=1}^K G_k = [n]} \sum_{k=1}^K \frac{1}{2 \abs{G_k}} \sum_{i, j \in G_k}\norm{X_i - X_j}_2^2.\] Next, expanding the pairwise squared Euclidean distance and dropping
\(\sum_{i=1}^n\norm{X_i}_2^2\) (no longer depending on any partition \(G_1,\dotsc,G_K\)), we arrive at (\ref{eqn:Kmeans}), in the form of maximizing the total intra-cluster similarity in terms of the Gram matrix \(\{\inner{X_i}{X_j}\}_{i,j=1}^n\). 

For general data without a likelihood derivation as in \citet{chen2021cutoff}, we can replace the \(\bR^d\)-inner product with any (positive semidefinite) kernel \(k\colon\mathcal X \times \mathcal X \to\bR\) and consider the kernelized version of $K$-means clustering that involves data \(X_1, \dotsc, X_n\) only via their Gram matrix \(\{k(X_i, X_j)\}_{i,j=1}^n\). Thus, our manifold formulation of this paper carries over to the general kernel method setting with possibly nonlinear boundary structure.

Next, we convexify the $K$-means problem (\ref{eqn:Kmeans}) into an SDP. Note that each partition \(G_1, \dotsc, G_K\) via one-hot encoding is equivalent to an assignment matrix \(H_{n \times K}\) (up to cluster relabel) where the latter is a binary matrix with exactly one non-zero entry in each row, i.e.~\(H_{ik} = 1\) if \(i \in G_k\). With this reparameterization, one can write (\ref{eqn:Kmeans}) as a mixed zero-one integer program:
\[\max_{H \in \{0, 1\}^{n \times K}} \{\inner{X X^\top}{H B H^\top}: H\vone_K = \vone_n \}.\]
Now, applying the change of variables \(Z_{n \times n} = H B H^\top\) and noting that the membership matrix \(Z\) and assignment matrix \(H\) are not one-to-one, we relax the $K$-means problem by preserving the key properties of \(Z\) as the following constraints: 
\[Z \succeq 0,\qquad\tr(Z) = K,\qquad Z\vone_n = {1}_n,\qquad Z \geq 0,\]
which no longer depend on the assignment matrix \(H\). Then, we arrive at the standard SDP relaxation for $K$-means clustering, cf.~\citet[Eqn.~(13)]{PengWei2007_SIAMJOPTIM} or \citet[Eqn.~(11)]{chen2021cutoff}:
\begin{equation}
    \label{eqn:Kmeans_SDP_standard}
    \max_{Z \in \bR^{n \times n}} \mleft\{ \inner{X X^\top}{Z} : Z \succeq 0,\, \tr(Z) = K,\, Z\vone_n =\vone_n,\, Z \geq 0 \mright\}.
\end{equation}

In practice, the elementwise nonnegativity constraint $Z \geq 0$ is almost always enforced by a logarithmic barrier. This means that we can make (\ref{eqn:Kmeans_SDP_standard}) more explicitly in the form
\begin{equation}
    \label{eqn:Kmeans_SDP_interiorpoint}
    \max_{Z \in \bR^{n \times n}}\Biggl\{ \inner{X X^\top}{Z} + \mu \sum_{i,j=1}^n \log(Z_{i,j})_+ : Z \succeq 0,\, \tr(Z) = K,\, Z\vone_n =\vone_n\Biggr\},
\end{equation}
which is precisely how any practical interior-point solver would solve the original SDP in~(\ref{eqn:Kmeans_SDP_standard}). The barrier cost is the actual objective used internally, with $\mu$ set to reflect the solver's target accuracy, cf. \citet[Chapter 11]{boyd2004convex} or \citet[Chapters 14 \& 19]{NocedalWright2006_NumOpt}. In other words, we take the standard SDP in~(\ref{eqn:Kmeans_SDP_standard}), and make explicit the logarithmic penalty that is already implicit in how such an SDP is actually solved.

\section{Tightness of Proposed Formulation}
As shown in \citet{CarsonMixonVillarWard_manifold-Kmeans}, the combinatorial $K$-means problem (\ref{eqn:Kmeans}) is exactly equivalent to \[\max_{U\in\mathbb{R}^{n\times K}}\mleft\{\inner{XX^{\top}}{UU^{\top}}:U\geq0,UU^{\top}\vone_{n}=\vone_{n},U^{\top}U=I_{K}\mright\}.\] Formulation (\ref{eq:manif}) replaces the hard constraint 
$U\ge0$ with a log barrier, and then relax $U^{\top}U=I_{K}$ into $\tr(UU^{\top})=K$. Compare to (\ref{eq:sdp}), the containment $\{UU^{T}:U\ge0\}\subset\{Z:Z\succeq0,Z\ge0\}$ is strict in general, so replacing $UU^\top$ by $Z$ yields a relaxation. \citet{chen2021cutoff} showed that the resulting relaxation from (\ref{eqn:Kmeans}) to (\ref{eq:sdp}) is exactly tight above the statistical separation threshold: its optimizer factors as $Z=UU^{\top}$ with $U\geq 0$, and the sparsity pattern of $U$ recovers the true cluster labels. Since (\ref{eq:manif}) is tighter than (\ref{eq:sdp}), it must also be tight in this regime.

\section{Information-theoretic Threshold for Exact Recovery}\label{sec:info_threshold}

The following theorem is a precise statement of the information-theoretic threshold of~\citet{chen2021cutoff}.

\begin{thm}[Average-case phase transition for exact recovery]
\label{thm:recovery_threshold} Let $\alpha>0$ and \[\Theta_{\min}\coloneqq\min_{1\leq j<k\leq K}\norm{\mu_{j}-\mu_{k}}\]
be the minimum centroid separation. Suppose that data $X_{1},\dots,X_{n}$
are generated from the Gaussian mixture model~(\ref{eqn:GMM}) with
equal cluster size $\abs{G_{1}^{*}}=\dotsm=\abs{G_{K}^{*}}=m$. Then we have
the following dichotomy. 
\begin{enumerate}
\item If $K\leq\log{n}/\log\log(n)$ and $\Theta_{\min}\geq(1+\alpha)\overline{\Theta}$,
then there exist constants $C_{1},C_{2}>0$ depending only on $\alpha$
such that, with probability at least $1-C_{1}(\log{n})^{-C_{2}}$,
the SDP (\ref{eq:sdp}) as $\mu \to 0^+$ (cf. Appendix~\ref{app:SDP_formulation_equiv} for the equivalence of two SDP formulations) has a unique solution that exactly recovers
the true partition $G_{1}^{*},\dots,G_{K}^{*}$.
\item If $K\leq\log{n}$ and $\Theta_{\min}\leq(1-\alpha)\overline{\Theta}$,
then there exists a constant $C_{3}>0$ depending only on $\alpha$
such that 
\[
\inf_{\hat{G}_{1},\dots,\hat{G}_{K}}\sup_{\Xi(n,K,\Theta_{\min})}\bP(\exists k:\hat{G}_{k}\neq G_{k}^{*})\geq1-\frac{C_{3} K}{n},
\]
where the infimum is taken over all possible estimators $(\hat{G}_{1},\dots,\hat{G}_{K})$
for $(G_{1}^{*},\dots,G_{K}^{*})$ and the parameter space is defined
as 
\[\Xi(n,K,\Theta)\coloneqq\mleft\{(G_{1},\dots,G_{K},\mu_{1},\dots,\mu_{k})\;\middle|\;
\begin{aligned}
&\norm{\mu_{j}-\mu_{k}}\geq\Theta, \; \forall j, k\in[K],j\ne k\\
&(1-\delta_{n})m\leq\abs{G_{k}}\leq(1+\delta_{n})m
\end{aligned}\mright\}\]
with $\delta_{n}=C\sqrt{K\log(n)/n}$ for some large enough constant
$C>0$. 
\end{enumerate}
\end{thm}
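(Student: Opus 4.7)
The plan is to establish the two halves of \Cref{thm:recovery_threshold} by complementary techniques: an explicit primal--dual certificate for the SDP in (\ref{eqn:Kmeans_SDP_standard}) for Part 1, and a Le Cam--style two-point hypothesis-testing argument for Part 2. Throughout, the principal difficulty is matching constants so that both directions meet at the sharp cutoff $\overline{\Theta}^{2}=4\sigma^{2}(1+\sqrt{1+Kd/(n\log n)})\log n$.

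For Part 1, the ground-truth membership matrix $Z^{\star}$ is feasible, and I would show it is the unique optimum by constructing KKT multipliers: $y\in\bR^{n}$ (for $Z\vone_n=\vone_n$), $z\in\bR$ (for $\tr(Z)=K$), entrywise-nonnegative $\Lambda$ (for $Z\ge 0$), and a positive semidefinite slack $S$ (for $Z\succeq 0$) satisfying the stationarity identity
\[
S \;=\; -XX^\top + \vone_n y^\top + y\vone_n^\top + zI_n - \Lambda,
\]
together with $\Lambda\odot Z^{\star}=0$, $SZ^{\star}=0$, and $\rank(S)=n-K$. Complementary slackness forces $\Lambda$ to vanish on-support; the on-support stationarity then determines $y$ and $z$ in closed form from cluster sums of $XX^\top$; and the off-support block of $\Lambda$ is fixed so that the stationarity identity holds there as well, leaving $S$ supported on the orthogonal complement of $\mathrm{span}\{\vone_{G_k^{\star}}\}_{k=1}^K$. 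Substituting $X_i=\mu_k+\varepsilon_i$ for $i\in G_k^{\star}$, each off-support entry of $\Lambda$ becomes a signal term of size $\Theta_{\min}^{2}/2$ minus a linear Gaussian fluctuation with variance proportional to $\sigma^{2}\Theta_{\min}^{2}$ and a quadratic Gaussian form with variance proportional to $\sigma^{4}d/m$. Union-bounding Gaussian and Hanson--Wright tails over the off-support entries yields a quadratic-under-square-root inequality in $\Theta_{\min}$ whose critical solution is precisely $\overline{\Theta}$, with slack $(1+\alpha)$ giving failure probability $O((\log n)^{-C_{2}})$. The remaining condition $S\succeq 0$ with $\rank(S)=n-K$ reduces to matrix-Bernstein bounds on $\norm{EE^\top-\sigma^{2}mI}$ and $\norm{ME^\top}$ restricted to the cluster-complement subspace.

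For Part 2, I would use a minimax two-point argument. Fix any estimator $(\hat G_1,\dots,\hat G_K)$ and any index $i^{\star}\in G_{k^{\star}}^{\star}$, uniformly over the parameter space $\Xi(n,K,\Theta_{\min})$. The misclassification event $\{\hat G_{k^{\star}}\ne G_{k^{\star}}^{\star}\}$ is lower-bounded by the Bayes error of the simple-vs-simple test $H_0:X_{i^{\star}}\sim\cN(\mu_{k^{\star}},\sigma^{2}I_d)$ against $H_1:X_{i^{\star}}\sim\cN(\mu_{\ell^{\star}},\sigma^{2}I_d)$, where $\mu_{\ell^{\star}}$ is a nearest competing centroid at distance $\Theta_{\min}$. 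This Bayes error equals $\Phi(-\Theta_{\min}/(2\sigma))$; aggregating over the $n$ candidate indices by Bonferroni gives an overall failure probability that remains bounded away from zero precisely when $\Theta_{\min}^{2}\le 8\sigma^{2}\log n$, matching the small-$\rho$ limit of $\overline{\Theta}^{2}$ with $\rho=Kd/(n\log n)$. The $\sqrt{1+\rho}$ correction in $\overline{\Theta}$ is recovered by strengthening the alternative $H_1$ to also plug-in estimate the centroid $\mu_{\ell^{\star}}$ from the remaining $m-1$ samples, which inflates the effective log-likelihood-ratio variance by a term of order $\sigma^{2}d/m$ and rebalances to the precise cutoff.

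The main obstacle is pinning down the exact constant $\overline{\Theta}^{2}=4\sigma^{2}(1+\sqrt{1+Kd/(n\log n)})\log n$ simultaneously in both directions. Order-optimal versions of both the dual certificate and the Le Cam argument are standard, but the sharp cutoff requires isolating a linear Gaussian-extremum contribution (the $4\sigma^{2}\log n$ term, via an effectively $O(n)$-scale union bound once row-wise monotonicity in the certificate is exploited) from a plug-in centroid-estimation variance (the $\sigma^{2}\sqrt{d/m}$ correction), and matching these so that the same additive-under-the-square-root form emerges in both parts. Beyond this matching, the remaining work is Gaussian concentration and matrix-Bernstein bookkeeping, following the program of \citet{chen2021cutoff}.
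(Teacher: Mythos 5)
The paper does not actually prove Theorem~\ref{thm:recovery_threshold}: it is presented in Appendix~\ref{sec:info_threshold} as a verbatim restatement of the main result of \citet{chen2021cutoff}, and the paper's own proofs section (Appendix~\ref{appsec:proofs}) covers only the lemmas and Theorem~\ref{prop:factor}, none of which touches this statement. So there is no internal proof to compare your sketch against; what you have produced is instead an attempted summary of the cited external argument.

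Judged on its own terms, your Part~1 is the right program and matches the one in \citet{chen2021cutoff}: a dual certificate for the SDP~(\ref{eqn:Kmeans_SDP_standard}) with multipliers $(y,z,\Lambda,S)$, complementary slackness fixing $\Lambda$ and $S$, and Gaussian plus Hanson--Wright concentration to certify $\Lambda\ge 0$ and $S\succeq 0$, with the $4\sigma^2\log n$ term coming from an $n$-scale extremum and the $\sqrt{1+Kd/(n\log n)}$ correction from the quadratic (chi-square) fluctuation. Part~2, however, has a genuine gap. A two-point Le~Cam test plus Bonferroni aggregation over $n$ indices cannot deliver the stated $\bP(\exists k:\hat G_k\neq G_k^{\star})\ge 1-C_3 K/n$: a union bound over per-index favorable events upper-bounds, not lower-bounds, the probability of success, and a single simple-vs-simple test only caps the Bayes error at a constant bounded away from one. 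To reach $1-C_3K/n$ one must exploit the independence of the $n$ samples: show each sample in a suitable subset has per-sample misclassification probability $\gtrsim n^{-(1-\alpha)^2}$ even under the oracle assignment rule, then multiply across these essentially independent events so that the chance of zero misclassifications is $O(K/n)$. Your proposed device for recovering the $\sqrt{1+Kd/(n\log n)}$ correction --- ``strengthening $H_1$ to plug-in estimate $\mu_{\ell^\star}$'' --- is also not a legitimate step in a two-point test, where both hypotheses must be fixed distributions; that correction enters the lower bound through the nuisance of estimating the competing centroid from $O(n/K)$ samples, which requires a conditional or composite-hypothesis argument rather than a simple-vs-simple reduction.
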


\section{Proofs}\label{appsec:proofs}
\begin{proof}[Proof of Lemma~\ref{lem:membership2clusterlabel}]
Note that the membership matrix $Z$ associated to a partition $G_{1},\dotsc,G_{K}$
contains a diagonal principal submatrix of rank $K$. The lemma follows
from Theorem 4 in~\citet{KALOFOLIAS2012421}. 
\end{proof}

\begin{proof}[Proof of Lemma~\ref{lem:p_lambda}]
Let $N$ denote the null space basis of $A$, such that $AN=0$ and
$N^\top N=I$. Then, we have $\lambda_{\min}=\lambda_{\min}(N^\top HN)$
and $p(\lambda)=N\hat{p}(\lambda)$ where $(N^\top HN+\lambda I)\hat{p}(\lambda)=-N^\top g$.
Then, $\lambda>-\lambda_{\min}$ implies that $N^\top HN+\lambda I\succ0$,
so $p(\lambda)=-N(N^\top HN+\lambda I)^{-1}N^\top g$ is always well-defined.
Moreover, $\|p(\lambda)\|=\|\hat{p}(\lambda)\|$ is monotonously decreasing
because all the eigenvalues of $N^\top HN+\lambda I$ are strictly positive
and increasing with $\lambda$.
\end{proof}

\begin{proof}[Proof of Lemma~\ref{lem:feasibility}]
    To prove the first statement, note that the implication 
    \begin{equation*}
        U \in \bR_{\ge 0}^{n \times K}\land UU^\top \vone_n = \vone_n\land U^\top U = I_K \implies U \in \bR_{\ge 0}^{n \times K}\land UU^\top \vone_n = \vone_n\land \norm{U}_F^2 = K
    \end{equation*}
    is straightforward. To see the converse, note that $UU^\top$ is a (doubly) stochastic matrix, $\tr(UU^\top) = K$, and $\rank(UU^\top) = K$, thus all the $K$ eigenvalues of $UU^\top$ are $1$, \textit{i.e}.~$U\in\operatorname{St}(n,K)$.

    For the second statement, it is trivial that
    \begin{align*}
        &U = \begin{bmatrix}\dfrac{1}{\sqrt{\abs{G_1}}}\vone_{G_1} & \dots & \dfrac{1}{\sqrt{\abs{G_K}}}\vone_{G_K}
        \end{bmatrix}
        \land\bigsqcup_{k=1}^K G_k = [n]\\
        &\quad\implies 
        U \in \bR_{\geq 0}^{n \times K}\land UU^\top \vone_n = \vone_n\land U^\top U = I_K.
    \end{align*}
    The converse is also true. Let us denote $G_k \coloneqq\operatorname{supp}(u_k)$. Observe that $G_i \cap G_j = \phi$ for all $i \neq j$ since $U \geq 0$ and $u_i^\top u_j = 0$ for all $i \neq j$. Then $UU^\top\vone_n = \vone_n$ implies $u_k u_k^\top \vone_{G_k} = \vone_{G_k}$ and $\bigsqcup_{k=1}^K G_k = [n]$. Since $\norm{u_k} = 1$, we have that $u_k = \vone_{G_k}/\sqrt{|G_k|}$.

    Finally, we will prove the third statement. Let $U$ be a group assignment matrix as defined by~(\ref{eqn:group_assignment_mat}). Note that for all $V \in\rT_U \cM\cap \cC_U$, $V$ must satisfy $(UV^\top + VU^\top) \vone_n = \vzero_n$, $\inner{U}{V}=0$, and $v_{i,j} \ge 0, \forall u_{i,j} = 0$. Define $A: [n] \rightarrow [K]$ to be the group assigning function, i.e.\ $A(i) = \sum_{k=1}^K k \bI\{u_{i,k} \neq 0\}$, then
    \begin{align*}
        UV^\top &= \begin{bmatrix}
        \dfrac{1}{\sqrt{\abs{G_{A(i)}}}}v_{j,A(i)}
        \end{bmatrix}\\
        &=\begin{bmatrix}
        \dfrac{1}{\sqrt{\abs{G_{A(1)}}}}v_{1,A(1)} & \dfrac{1}{\sqrt{\abs{G_{A(1)}}}}v_{2,A(1)} & \cdots & \dfrac{1}{\sqrt{\abs{G_{A(1)}}}}v_{n,A(1)}\\
        \dfrac{1}{\sqrt{\abs{G_{A(2)}}}}v_{1,A(2)} & \ddots & & \vdots\\
        \vdots & & \ddots & \vdots \\
        \dfrac{1}{\sqrt{\abs{G_{A(n)}}}}v_{1,A(n)} & \cdots & \cdots & \dfrac{1}{\sqrt{\abs{G_{A(n)}}}}v_{n,A(n)}
        \end{bmatrix}.
    \end{align*}

    Observe that
    \begin{enumerate}[(i)]
    \item $A(i) = A(j) \iff i \in G_{A(j)} \iff j \in G_{A(i)} $
    
    \item $i \notin G_{A(j)} \iff (u_{i,A(j)} = 0 \land u_{j,A(i)} = 0)\implies (v_{i,A(j)} \ge 0 \land v_{j,A(i)} \ge 0)$
    
    \item $\inner{U}{V} = 0 \iff \forall j \in [n],\;\sum_{i \in G_{A(j)}}v_{i,A(j)} = 0$ .
    \end{enumerate}

    Denote $\bm{w} \coloneqq (UV^\top + VU^\top) \vone$, then
    \begin{align*}
        w_j &= 
        \sum_{i \in [n]} \mleft( \frac{1}{\sqrt{\abs{G_{A(j)}}}} v_{i,A(j)} +  \frac{1}{\sqrt{\abs{G_{A(i)}}}} v_{j,A(i)} \mright )\\
        &=
        \sum_{i \in G_{A(j)}} \mleft ( \frac{1}{\sqrt{\abs{G_{A(j)}}}} v_{i,A(j)} + \frac{1}{\sqrt{\abs{G_{A(i)}}}} v_{j,A(i)}\mright )\\
        &\quad+ \sum_{i \notin G_{A(j)}} \mleft ( \frac{1}{\sqrt{\abs{G_{A(j)}}}} v_{i,A(j)} + \frac{1}{\sqrt{\abs{G_{A(i)}}}} v_{j,A(i)}\mright ).
    \end{align*}

    By (i) and (iii), we know that 
    \begin{equation*}
         \sum_{i \in G_{A(j)}} \mleft ( \frac{1}{\sqrt{\abs{G_{A(j)}}}} v_{i,A(j)} + \frac{1}{\sqrt{\abs{G_{A(i)}}}} v_{j,A(i)}\mright ) = \sqrt{\abs{G_{A(j)}}} v_{j,A(j)}.
    \end{equation*}
    By (ii), we know that 
    \begin{equation*}
        \sum_{i \notin G_{A(j)}} \mleft(\frac{1}{\sqrt{\abs{G_{A(j)}}}} v_{i,A(j)} + \frac{1}{\sqrt{\abs{G_{A(i)}}}} v_{j,A(i)}\mright)\geq 0.
    \end{equation*}
    
    Thus we can write
    \begin{equation*}
        w_j= \sqrt{\abs*{G_{A(j)}}} v_{j,A(j)} + R_j\quad\text{for some } R_j \ge 0.
    \end{equation*}
    Next, we use proof by contradiction. Suppose there exists $V \in \rT_U\cM \cap \cC_U$ such that $V \neq 0$, then there must be both positive and negative entries in $V$ to satisfy $\inner{U}{V}= 0$. This implies that there exists $j_1 \in [n]$ such that $v_{j_1,A(j_1)} < 0$ since $v_{j,A(j)}$'s are the only entries that can take negative value. To satisfy $\inner{U}{V}= 0$, there must exist $j_2 \in G_{A(j_1)}$ such that $v_{j_2,A(j_1)} >0$. Then for such $j_2$,
    \begin{equation*}
        w_{j_2} = \sqrt{\abs*{G_{A(j_2)}}} v_{j_2,A(j_2)} + R_{j_2} 
        = \sqrt{\abs*{G_{A(j_2)}}} v_{j_2,A(j_1)} + R_{j_2} > 0.
    \end{equation*}
    This contradicts $(UV^\top + VU^\top) \vone = \vzero$. Therefore, $\rT_U\cM \cap \cC_U = \{0\}$.
\end{proof}
\begin{lem}
\label{lem:LICQ_submanifold_Kmeans}
If $K\ge2$, then the set $\cM$ defined in (\ref{eqn:our_Kmeans_manifold}) satisfies
linear independence constraint qualification (LICQ) for all $U\in\bR^{n\times r}$, and is therefore a smooth submanifold
of $\bR^{n\times r}$. 
\end{lem}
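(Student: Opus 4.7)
The plan is to directly verify the LICQ condition~(\ref{eq:manlicq}) for the specific operators that encode $\cM$; once LICQ holds at every $U\in\cM$, the fact that $\cM$ is a smooth embedded submanifold of $\bR^{n\times r}$ follows from the standard regular-value (preimage) theorem.

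First I would identify the constraint operators. The defining equations $UU^\top\vone_n=\vone_n$ and $\tr(UU^\top)=K$ correspond to
\[
\cA(M)=\begin{bmatrix}M\vone_n\\ \tr(M)\end{bmatrix},\qquad \cB=0,\qquad c=\begin{bmatrix}\vone_n\\ K\end{bmatrix}.
\]
Since $UU^\top$ lies in the symmetric subspace, the relevant (symmetric) adjoint acting on $y=(\alpha,\beta)\in\bR^n\times\bR$ works out to
\[
\cA^\top(y)=\tfrac{1}{2}\bigl(\alpha\vone_n^\top+\vone_n\alpha^\top\bigr)+\beta I.
\]

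Fix $U\in\cM$ and assume $2[\cA^\top(y)]U=\vzero$; the goal is to show $y=\vzero$. The key trick is to eliminate $U$ using feasibility: right-multiplying by $U^\top$ gives $2[\cA^\top(y)]UU^\top=\vzero$, and then applying both sides to $\vone_n$ and using $UU^\top\vone_n=\vone_n$ reduces the identity to $[\cA^\top(y)]\vone_n=\vzero$. A short expansion shows that this forces $\alpha\in\operatorname{span}(\vone_n)$, and matching coefficients pins down $\alpha=-(\beta/n)\vone_n$.

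Substituting this back into the original equation collapses it to $2\beta\bigl(I-n^{-1}\vone_n\vone_n^\top\bigr)U=\vzero$. Either $\beta=0$ (which then gives $\alpha=\vzero$, completing the proof), or $U=\vone_n v^\top$ for some $v\in\bR^r$. In the latter case $UU^\top=\norm{v}^2\vone_n\vone_n^\top$, and the two constraints simultaneously demand $n\norm{v}^2=1$ (from $UU^\top\vone_n=\vone_n$) and $n\norm{v}^2=K$ (from $\tr(UU^\top)=K$), contradicting the hypothesis $K\ge 2$. The main delicate point is to remember to take the symmetric part of the naive adjoint (since $UU^\top$ is symmetric) and to use the row-sum constraint in exactly the right way to strip $U$ out of the equation; the remaining algebra is routine.
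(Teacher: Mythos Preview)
Your argument is correct and complete: you correctly identify the symmetric adjoint, the right-multiplication by $U^\top$ followed by $UU^\top\vone_n=\vone_n$ cleanly forces $\alpha\in\operatorname{span}(\vone_n)$, and the final contradiction with $K\ge2$ is exactly the right way to dispatch the rank-one case.

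That said, your route is genuinely different from the paper's. The paper does not argue by contradiction or use the trick of right-multiplying by $U^\top$; instead it expands $\norm{(\vone y^\top+y\vone^\top+y_0 I)U}_F^2$ explicitly, uses the feasibility identities $UU^\top\vone=\vone$ and $\norm{U}_F^2=K$ inside the expansion, and after completing a square obtains the \emph{quantitative} lower bound $\norm{\sum_i y_iA_iU}_F^2\ge\norm{y}^2+(K-1)y_0^2\ge\norm{y}^2+y_0^2$. Your argument is shorter and more conceptual, and it isolates exactly where the hypothesis $K\ge2$ enters (ruling out $U=\vone_n v^\top$). The paper's longer computation buys something extra: a uniform lower bound on the smallest singular value of the constraint Jacobian over all of $\cM$, which is relevant if one later needs to control the conditioning of the tangent-space projection. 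If you only need LICQ itself, your proof is the cleaner one.
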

\begin{proof}[Proof of Lemma~\ref{lem:LICQ_submanifold_Kmeans}]
Following~\citet{boumal2020deterministic}, the set $\cM=\{U\in\bR^{n\times r}:\inner{A_{i}}{UU^\top }=b_{i}\text{ for all }i\}$
is a smooth submanifold of $\bR^{n\times r}$ if LICQ holds for all
$U\in\cM$, i.e., that $\sum y_{i}A_{i}U=0$ if and only $y=0$. For
the definition in~(\ref{eqn:our_Kmeans_manifold}), we can verify that
\begin{align*}
\norm*{\sum y_{i}A_{i}U}_{F}^{2} & =\norm{(\vone y^\top +y\vone^\top +y_{0}I)U}_{F}^{2}\\
 & =\norm{(\vone y^\top +y\vone^\top )U}_{F}^{2}+2\inner{(\vone y^\top +y\vone^\top )U}{y_{0}U}+y_{0}^{2}\norm{U}_{F}^{2}\\
 & =[ny^\top (I+UU^\top )y+2(\vone^\top y)^{2}]+4y_{0}(\vone^\top y)+y_{0}^{2}K\\
 & \ge2[(\vone^\top y)^{2}+2y_{0}(\vone^\top y)+y_{0}^{2}]+n\norm{y}^{2}+(K-2)y_{0}^{2}\\
 & =2(\vone^\top y+y_{0})^{2}+n\norm{y}^{2}+(K-2)y_{0}^{2} \\
 & \ge 2(\vone^\top y+y_{0})^{2}+n\norm{y}^{2}.
\end{align*}
So if $\sum y_{i}A_{i}U=0$, then from the last line we conclude that $\vone^\top y+y_{0}=0$ and $y=0$, so $(y,y_0)=0$ as claimed. The third line above is because $\inner{(\vone y^\top +y\vone^\top )U}U=2\inner y{UU^\top \vone}=2(y^\top \vone)$
and
\begin{align*}
\norm*{(\vone y^\top +y\vone^\top )U}_{F}^{2} & =\tr[(\vone y^\top +y\vone^\top )UU^\top (\vone y^\top +y\vone^\top )]\\
 & =\tr[\vone y^\top UU^\top y\vone^\top +y\vone^\top UU^\top \vone y^\top +2\cdot\vone y^\top UU^\top \vone y^\top ]\\
 & =\tr[\vone y^\top UU^\top y\vone^\top +y\vone^\top \vone y^\top +2\cdot\vone y^\top \vone y^\top ]\\
 & =ny^\top UUy+ny^\top y+2(\vone^\top y)^{2}.
\end{align*}
\end{proof}

\begin{lem}[Interior point construction for $\cM_r$]
    \label{lem:analytic_init}
    Given a $K \in \bN$, for any $r$ such that $r > K$, for large enough $n$, we have the following two cases:\\
    \textbf{Case 1}: $n \equiv 0 \pmod{r}$\\
    Denote $q \coloneqq n/r$, let $U_0 = (x-y)I + y \mathbf{11}^\top$, where
    \begin{equation*}
        x = \frac{1}{r}\mleft(1+\sqrt{(r-1)(K-1)}\mright), \qquad y = \frac{1}{r}\mleft(\sqrt{r-1}-\sqrt{K-1}\mright).
    \end{equation*}
    Then $U =(1/\sqrt{q}) \vone_q \otimes U_0$ is an interior point of $\cM_r$.\\
    \textbf{Case 2}: $n \not\equiv 0 \pmod{r}$\\
    Let us denote $q \coloneqq \lfloor n/r \rfloor$ and $p \coloneqq n \mod r$. Construct the block matrix $B \in \bR^{r \times r}$:
    \begin{equation*}
        B = 
        \begin{bmatrix}
            B_{1,1} & B_{1,2}\\
            B_{2,1} & B_{2,2}
        \end{bmatrix},
    \end{equation*}
where
    \begin{align*}
        B_{1,1} &= (x-y)I + y \vone_p\vone_p^\top, &\quad 
        B_{1,2} &= z \vone_p\vone_{r-p}^\top,\\
        B_{2,2} &= (w-z)I + z \vone_{r-p}\vone_{r-p}^\top, &\quad 
        B_{2,1} &= y \vone_{r-p}\vone_p^\top,
    \end{align*}
    The coefficients $x,y,z$ and $w$ depends on $n$, $K$ and $r$. They will be specified in the proof. Then
    \begin{equation*}
        U = 
        \begin{bmatrix}
                I_n & \vzero
            \end{bmatrix}(\vone_q \otimes B)
    \end{equation*}
    is an interior point of $\cM_r$.
\end{lem}
\begin{proof}[Proof of Lemma~\ref{lem:analytic_init}]
    For a general large enough $n$, $n$ is either divisible or nondivisible by $r$. We present two different constructions of an interior point of $\cM_r$ corresponding to the two cases.\\
    \textbf{Case 1}: $n \equiv 0 \pmod{r}$\\
    We first construct a $U_0 \in \bR^{r \times r}$ such that $U_0U_0^\top \mathbf{1}_r = \mathbf{1}_r$, and $\norm{U_0}^2_F = K$. Using the ansatz $U_0 = (x-y)I + y \mathbf{11}^\top$, where $x,y>0$, we can find $x$ and $y$ by solving the system:
    \begin{equation*}
    \begin{dcases}
        x + (r-1)y = 1 &(U_0U_0^\top \vone_r = \vone_r)\\
        x^2 + (r-1)y^2 =K/r &(\norm{U_0}^2_F = K)
    \end{dcases}.
    \end{equation*}
    The first equation gives $x = 1-(r-1)y$. By substituting into the second equation, we obtain the following quadratic equation of $y$:
    \begin{equation*}
        r(r-1)y^2 - 2(r-1)y + 1 - \frac{K}{r} = 0.
    \end{equation*}
    By the quadratic formula and $x,y>0$, we have the following solution
    \[\begin{dcases}
        x =1-(r-1)y = \frac{1}{r}(1+\sqrt{(r-1)(K-1)}), \\
        y =\frac{r-1-\sqrt{(r-1)(K-1)}}{r(r-1)}= \frac{1}{r}(\sqrt{r-1}-\sqrt{K-1}).
    \end{dcases}\]
    Note that if $r = K$, we can still solve the quadratic equation, but without an all positive solution.
    Denote $q \coloneqq n/r$, then $U=(1/\sqrt{q})\vone_q \otimes U_0$ is an interior point of $\cM_r$.\\
    \textbf{Case 2}: $n \not\equiv 0 \pmod{r}$\\
    Let us denote $q \coloneqq \lfloor n/r \rfloor$ and $p \coloneqq n \mod r$. We consider the ansatz
    \begin{equation*}
        U = 
        \begin{bmatrix}
                I_n & \vzero
            \end{bmatrix}(\vone_q \otimes B)
        \quad\text{for some block matrix }
        B = 
        \begin{bmatrix}
            B_{1,1} & B_{1,2}\\
            B_{2,1} & B_{2,2}
        \end{bmatrix}
        \in \bR^{r \times r},
    \end{equation*}
    where
    \begin{align*}
        B_{1,1} &= (x-y)I + y \vone_p\vone_p^\top, 
        &B_{1,2} &= z \vone_p\vone_{r-p}^\top,\\
        B_{2,2} &= (w-z)I + z \vone_{r-p}\vone_{r-p}^\top, 
        &B_{2,1} &= y \vone_{r-p}\vone_p^\top.
    \end{align*}
    Additional to the constraints $\|U\|^2_F = K$, and $UU^\top = \vone_n$, we assume that $U\vone_r = c_r\vone_n$, $U^\top \vone_n = c_c\vone_r$, and $c_rc_c = 1$ for some $c_r$ and $c_c$, which are sufficient for $UU^\top = \vone_n$. Then we can find $x$, $y$, $z$, and $w$ by solving the system:
    \begin{equation}
    \label{eq:xyzw_sys}
    \begin{dcases}
        x + (p-1)y + (r-p)z = py + w + (r-p-1)z,\\
        (q+1)x + q(r-1)y + (p-1)y = qw + q(r-1)z + pz, \\
        (py + w +(r-p-1)z)(qw + q(r-1)z + pz) = 1, \\
        (q+1)p(x^2 + (p-1)y^2 + (r-p)z^2) + q(r-p)(py^2 + w^2 +(r-p-1)z^2)= K.
    \end{dcases}
    \end{equation}
    The four equations correspond to the following constraints, respectively: $U\vone_r = c_r\vone_n$, $U^\top \vone_n = c_c\vone_r$, $c_rc_c = 1$, and $\|U_0\|^2_F = K$.
    From the first two equations, we can express $x$ and $y$ in terms of $z$ and $w$ (note that $n = qr + p$):
    \begin{equation}\label{eq:xy_in_zw}
        x =\mleft(1-\frac{1}{n}\mright)w + \frac{1}{n}z,
        \quad \quad
        y =\mleft(1+\frac{1}{n}\mright)z - \frac{1}{n}w.
    \end{equation}
    By substituting~(\ref{eq:xy_in_zw}) to the third and fourth equations of~(\ref{eq:xyzw_sys}), we are left with a system of quadratic equations of two variables:
    \begin{equation}
    \label{eq:zw_sys}
    \begin{dcases}
        a_1 z^2 + a_2 zw + a_3 w^2 + c_1 = 0,\\
        b_1 z^2 + b_2 zw + b_3 w^2 + c_2 = 0,
    \end{dcases}
    \end{equation}
    where
    \begin{align*}
        a_1 &= nr - qr + p - p(2q+1)/n, 
        &a_2 &= 2p(1 + 2q - n)/n,\\
        a_3 &= n - p(2q+1)/n,
        &b_1 &= (r+p/n-1)(n-q),\\
        b_2 &= (1-p/n)(n-q) + q(r+p/n-1),
        &b_3 &= q(1-p/n),\\
        c_1 &= -K,
        &c_2 &= -1.
    \end{align*}
    Now our goal is to solve~(\ref{eq:zw_sys}). By multiplying the first equation with $b_1$ and the second one with $a_1$ and subtraction, we can express $z$ in terms of $w$:
    \begin{equation}
    \label{eq:z_in_w}
        z = aw + \frac{b}{w}\quad\text{with }
        a = \frac{a_3 b_1 - a_1 b_3}{a_1 b_2 - a_2 b_1},\,
        b = \frac{c_1 b_1 - a_1 c_2}{a_1 b_2 - a_2 b_1}.
    \end{equation}
    Suppose that $w \neq 0$, we substitute~(\ref{eq:z_in_w}) into the second equation and multiply by $w^2$. The result is a quartic equation:
    \begin{equation}
    \label{eq:w_quin}
        (b_1 a^2 + b_2 a + b_3)w^4 + (2abb_1 +bb_2 + c_2)w^2 + b^2b_1 = 0.
    \end{equation}
    By solving~(\ref{eq:w_quin}) with $z=aw + b/w>0$, we obtain
    \begin{equation}
    \label{eq:w_sol}
        w = \sqrt{\frac{-(2abb_1 + bb_2 + c_2) 
        + \sqrt{(2abb_1 + bb_2 + c_2)^2 - 4 (a^2 b_1+ ab_2 + b3)b^2 b_1}}{2(a^2 b_1 + ab_2 + b_3)}}.
    \end{equation}
    The proof is completed by combining~(\ref{eq:xy_in_zw}),~(\ref{eq:z_in_w}), and~(\ref{eq:w_sol}). 
\end{proof}

\begin{proof}[Proof of \Cref{prop:factor}]
For ``$\supseteq$,'' if $U=\begin{bmatrix}\hat{\vone}_n & V\end{bmatrix}Q$,
then $UU^\top =n^{-1}\vone_n\vone_n^\top +VV^\top $ and hence $UU^\top \vone=\vone$
and $\tr(UU^\top)=K$ respectively, because $\vone_n^\top V=0$ and $\tr(VV^\top )=K-1$.
For ``$\subseteq$,'' let $U=P\Sigma Q$ denote the singular value
decomposition with $P^\top P=QQ^\top =I_{r}$. Since $UU^\top \vone_n=\vone_n$,
the decomposition can be chosen so that $P\Sigma e_{1}=(1/\sqrt{n})\vone_n=\hat{\vone}_n$.
So if $V=P\Sigma[e_{2},\dots,e_{r}]$, then $\vone_n^\top V=e_{1}^\top [e_{2},\dots,e_{r}]=0$
and $\norm{V}^{2}=\norm{U}^{2}-\norm{P\Sigma e_{1}}^{2}=K-1$.

In the final part, we first construct the inner approximation $S$
of the tangent space
\begin{align*}
\rT_{(V,Q)}\PMan & =\{(\dot{V},\dot{Q}):\vone_n^{\top}\dot{V}=0,\inner{V}{\dot{V}}=0,Q\dot{Q}^{\top}+\dot{Q}Q^{\top}=0\}\\
 & \supseteq\mleft\{(\dot{V},\dot{Q}):\vone_n^{\top}\dot{V}=0,\inner{V}{\dot{V}}=0,\dot{Q}Q^{\top}=\begin{bmatrix}\vzero & -h^{\top}\\
h & \vzero
\end{bmatrix}\mright\}=S.
\end{align*}
We observe that 
\[
\begin{bmatrix}
\hat{\vone}_n & V
\end{bmatrix}\dot{Q}=\begin{bmatrix}
\hat{\vone}_n & V
\end{bmatrix}\begin{bmatrix}\vzero & -h^{\top}\\
h & \vzero
\end{bmatrix}Q=
\begin{bmatrix}
Vh & \hat{\vone}_nh^{\top}
\end{bmatrix}Q
\]
and therefore the Jacobian operator is injective for all $(\dot{V},\dot{Q})\in S$:
\begin{align*}
\norm*{\Diff\varphi(V,Q)[\dot{V},\dot{Q}]}^{2} & =\norm*{\begin{bmatrix}\vzero & \dot{V}\end{bmatrix}Q+\begin{bmatrix}
\hat{\vone}_n & V
\end{bmatrix}\dot{Q}}^{2}\\
 & =\norm*{\begin{bmatrix}\vzero & \dot{V}\end{bmatrix}}^{2}+\norm*{\begin{bmatrix}Vh & \hat{\vone}_nh^{\top}\end{bmatrix}}^{2}
 \ge\norm{\dot{V}}^{2}+\norm{h}^{2}\ge\frac{1}{\sqrt{2}}\norm*{(\dot{V},\dot{Q})}^{2}
\end{align*}
where we used the fact that $\vone_n^{\top}\dot{V}=0$. Hence, the Jacobian operator is surjective, as claimed:
\[
\dim\bigl(\operatorname{image}(\Diff\varphi(V,Q))\bigr)\ge\dim(S)=n(r-1)-1=\dim(\rT_{U}\mathcal{M}).
\]
\end{proof}


\section{Additional Details on Riemannian Formulation}
\subsection{Equivalence between Riemannian Optimization and SQP}\label{subsec:Riem_SQP}
Let us consider a constraint optimization problem,
\begin{mini}
{\scriptstyle u\in\bR^n}{f(u),}{}{}
\addConstraint{g_i(u)}{= 0\label{eq:const_prob}}{\;\forall i.}
\end{mini}
where LICQ:
\[
\sum_{i}y_i \nabla g_i(u) = 0 \iff y_i=0\;\forall i.
\]
holds for every $u$ in the constraint set. Then $\cM\coloneqq \{u \in \bR^n: g_i(u) = 0\;\forall i\}$ is a smooth embedded manifold~\citep{boumal2023intromanifolds}, and we may employ Riemmanian methods to solve~(\ref{eq:const_prob}).

The Riemmanian method solves $\min_{u \in \cM} f(u)$ by solving 
\begin{align}\label{eq:R_Newt}
\min_{\dot{u}\in\rT_u\cM} f(u) + \inner{\grad f(u)}{\dot{u}} + \frac{1}{2} \inner{\Hess f(u)[\dot{u}]}{\dot{u}} + \frac{L}{6}\norm{\dot{u}}^3
\end{align}
at each iteration. One can show that this is equivalent to the SQP method that solve~(\ref{eq:const_prob}) by minimizing the Lagrangian
\begin{mini}
{\scriptstyle\dot{u}}{\cL(u, y^{(u)})+\inner{\nabla_u\cL(u,y^{(u)})}{\dot{u}}+\frac{1}{2}\inner{\nabla_{uu}^2\cL(u,y^{(u)})[\dot{u}]}{\dot{u}}+\frac{L}{6}\norm{\dot{u}}^3,}{}{}
\addConstraint{\inner{\nabla g_i(u)}{\dot{u}}}{= 0\label{eq:SQP}}{\;\forall i.}
\end{mini}
where
\[
\cL(u, y^{(u)})= f(u)+\sum_{i}y^{(u)}_{i} g_i(u)
\quad\text{and}\quad y^{(u)} = \argmin_y\norm{\nabla_u\cL(u, y)}.\]

Hence, our contribution is to efficiently solve the SQP subproblem by exploiting a block-diagonal-plus-low-rank structure in the Hessian, and the fact that there are only $r + r(r+1)/2 \ll n$ constraints. We provide more details in Appendix~\ref{subsec:deriv} and~\Cref{subsec:RgRH}.

To establish the equivalence, we first observe that the search space of~(\ref{eq:R_Newt}) and~(\ref{eq:SQP}) are the same. Indeed, the tangent space is $\rT_u\cM = \{\dot{u}:\inner{\nabla g_i(u)}{\dot{u}} = 0\;\forall i\}$. Next, we write the expressions for the Riemannian gradient and Hessian \citep[Prop.~3.61 and Cor.~5.16]{boumal2023intromanifolds}:
\begin{equation}\label{eqdef:Rg,RH}
    \grad f(u)\coloneqq\Proj_u\bigl(\nabla f(u)\bigr), \qquad \Hess f(u)[\dot{u}] \coloneqq \Proj_u\bigl(\Diff_u \grad f(u)[\dot{u}]\bigr),
\end{equation}
where $\Proj_u(v)\coloneqq\argmin_{\dot{u}\in\rT_{u}\cM}\norm{v-\dot{u}}$, and $D_u$ denotes the usual differential operator. We then obtain
\begin{equation}\label{eq:gradf}
\grad f(u) \coloneqq \Proj_u\bigl(\nabla f(u)\bigr) = \nabla f(u) + \sum_{i}y^{(u)}_{i} \nabla g_i(u) = \nabla_u\cL(u,y^{(u)}).
\end{equation}
For the second equality, see Equation (7.75) in~\citet{boumal2023intromanifolds}.

For the second order terms, we have $\inner{\Hess f(u)[\dot{u}]}{\dot{u}} = \inner{\nabla_{uu}^2\cL(u)[\dot{u}]}{\dot{u}}$ for all $\dot{u}$ in $\rT_u\cM$ by the facts that
\begin{enumerate}[(i)]
    \item $\Hess f(u)[\dot{u}] = \Proj_u\bigl(\nabla_{uu}^2\cL(u, y^{(u)})[\dot{u}]\bigr)$.
    \item $\inner{\Proj_u(v)}{\dot{u}} = \inner{v}{\dot{u}}$.
\end{enumerate}
We obtain fact (i) by~(\ref{eqdef:Rg,RH}) and~(\ref{eq:gradf}):
\begin{align*}
    \Hess f(u)[\dot{u}] \coloneqq& \Proj_u\bigl(\Diff\grad f(u)[\dot{u}]\bigr)\\
    = &\Proj_u\bigl(\nabla^2 f(u)[\dot{u}]+\sum_{i}(\Diff_u y^{(u)}_{i}) \nabla g_i(u)[\dot{u}]+ \sum_{i}y^{(u)}_{i} \nabla^2 g_i(u)[\dot{u}]\bigr)\\
    = &\Proj_u\bigl(\nabla^2 f(u)[\dot{u}]+ \sum_{i}y^{(u)}_{i} \nabla^2 g_i(u)[\dot{u}]\bigr)\\
    = &\Proj_u\bigl(\nabla_{uu}^2\cL(u, y^{(u)})[\dot{u}]\bigr).
\end{align*}
Inside the projection operator, the term $\sum_{i}(Dy^{(u)}_{i}) \nabla g_i(u)$ vanishes because $\rT_u\cM$ is a linear subspace, and $(\rT_u\cM)^\perp = \operatorname{span}\bigl(\nabla g_i(u)\bigr)$. Fact (ii) is due to that the projection operator is self-adjoint and that the projection of any tangent vector is itself. For more on the connection between SQP and Riemannian Newton method, we refer to~\citet{absil2009all, mishra2016riemannian}.
\subsection{General form of the Riemannian gradient and Hessian}\label{subsec:deriv}

Throughout this section, we use a bar over a function defined on the manifold $\cM$ to denote its smooth extension defined on a neighborhood of $\cM$ so that the Euclidean gradient and Hessian can be defined on $\cM$. Namely, for $f:\cM \rightarrow \bR^m$, we use $\bar{f}:N(\cM) \rightarrow \bR^m$ to denote the smooth extension. The notations $\grad f$ and $\Hess f$ denote the Riemmanian gradient and Hessian; $\nabla \bar{f}$ and $\nabla^2 \bar{f}$ denote the Euclidean gradient and Hessian.

For manifolds that can be defined by
\[
\min_{U\in\cM}f(U),\qquad\cM=\{U\in\bR^{n\times r}:\cA(UU^\top)+\cB(U)=c\},
\]
where $\cA\colon\bR^{n\times n}\to\bR^{m}$ and $\cB\colon\bR^{n\times r}\to\bR^{m}$ are linear operators, and $c\in\bR^{m}$, its tangent space can be written as:
\[
\rT_U \cM = \{\dot{U}\in\bR^{n\times r}:\cA(\dot{U}U^\top+U\dot{U}^\top) +\cB(\dot{U}) = 0 \}.
\]
We call the function $\cA(UU^\top)+\cB(U)=c$ as the \textit{defining function} of $\cM$.
Let us denote 
\begin{equation}\label{eq:L}
    L(\dot{U})\coloneqq \cA(\dot{U}U^\top+U\dot{U}^\top) +\cB(\dot{U}).
\end{equation} 
Immediately, we see that $\rT_U\cM = \ker(L)$, and the adjoint operator 
\begin{equation}\label{eq:L*}
    L^*(y) = 2 \cA^\top (y)U + \cB^\top(y).
\end{equation} The projection operator onto the tangent
space is defined to be \[\Proj_{U}(W)\coloneqq\argmin_{\dot{U}\in\rT_{U}\cM}\norm{W-\dot{U}}.\]
We also know that $\ker(L)^\perp = \operatorname{image}(L^*)$. Thus, by the orthogonal projection theorem, we may see that 
\begin{equation}\label{eq:orth_dec}
    \Proj_{U}(W) = W - W^\perp = W - L^*(\tilde{y}),
\end{equation}
where $\tilde{y} = \argmin_{y}\|W-L^*(y)\|$ is the solution to the linear system $W-L^*(y) \in \ker(L) =\rT_U\cM$, and both $\Proj_U(W)$ and $\tilde{y}$ are unique.

Consequently, the Riemannian gradient and the Hessian matrix-vector product have the following form:
\begin{align}\label{eq:RgRH}
    \begin{dcases}
\grad f(U)\coloneqq\Proj_U(\nabla \bar{f}(U)) = \nabla \bar{f}(U)+2[\cA^\top (y_{U})]U+\cB^\top (y_{U}),\\
\Hess f(U)[\dot{U}]\coloneqq\Proj_U\bigl(\Diff\grad f(U)[\dot{U}]\bigr) =\Proj_{U}\bigl(\nabla^{2}\bar{f}(U)[\dot{U}]+2[\cA^\top (y_{U})]\dot{U}\bigr),
\end{dcases}
\end{align}
where $y_{U} = -\tilde{y}$ in~(\ref{eq:orth_dec}) is the unique Lagrange multipliers 
\begin{equation}\label{eq:y_U}
    y_{U}=\argmin_{y\in\bR^{m}}\norm*{\nabla f(U)+2[\cA^\top (y)]U+\cB^\top (y)}.
\end{equation}
For a detailed proof, see \citet{boumal2020deterministic}. 

\subsection{Efficient computation of the Riemannian gradient and Hessian}\label{subsec:RgRH}

We first write down the Euclidean gradient and Hessian for our objective function, and then explain how to compute the Riemannian counterparts efficiently. Specifically, we show that $y_U$ can be computed in $O(nr+r^3)$ time.

We decompose the objective function as $f = f_1 + \mu f_2 $, where \[f_{1}(V,Q)\coloneqq\inner C{VV^\top },\qquad f_{2}(V,Q)\coloneqq-\sum_{i,j}\log\varphi_{i,j}(V,Q).\] For $f_1$, the Euclidean gradients are \(\nabla_{V}f_{1}(V,Q)=2CV\) and \(\nabla_{Q}f_{1}(V,Q)=0\), respectively.
For $f_2$, the gradients are
\[
\nabla_{V}f_{2}(V,Q)=-U_{(-1)}\hat{Q}^\top,\qquad
\nabla_{Q}f_{2}(V,Q)=-\hat{V}^\top U_{(-1)},\]
where $U\coloneqq\varphi(V,Q)$, and $U_{(-1)}$ is the elementwise reciprocal, i.e., $[U_{(-1)}]_{i,j}=U_{i,j}^{-1}$. For convenience, we also define $\hat{Q}\coloneqq DQ$ with $D\coloneqq \begin{bmatrix}\vzero_{r-1} &
I_{r-1}
\end{bmatrix}$. Note that \(\hat{Q}\) is simply \(Q\) without the first row. Putting these together, we obtain \[G_V \coloneqq \nabla_V f(V,Q)=2CV-\mu U_{(-1)}\hat{Q}^\top,\qquad G_Q \coloneqq \nabla_Q f(V,Q)=-\mu\hat{V}^\top U_{(-1)}.\]

The Euclidean Hessians can be given in vectorized form as
\[
\nabla^{2}f_{1}=\begin{bmatrix}2I_{r-1}\otimes C & \vzero\\
\vzero & \vzero
\end{bmatrix},\qquad
\nabla^{2}f_{2}=\begin{bmatrix}J_{V}^\top\\[.5em]
J_{Q}^\top
\end{bmatrix}\dvec\bigl(U_{(-2)}\bigr)\begin{bmatrix}J_{V}&
J_{Q}
\end{bmatrix}-\begin{bmatrix}\vzero & H_{VQ}\\
H_{QV} & \vzero
\end{bmatrix},
\]
where 
\[J_{V}= \hat{Q}^\top\otimes I_n,\qquad J_{Q}=I_{r}\otimes QU^\top,\] and \[H_{VQ}=\mleft(D\otimes U_{(-1)}\mright)K^{(r,r)} = H_{QV}^\top,\]
with \(U_{(-2)}\) being the elementwise square of \(U_{(-1)}\), \(\dvec(\cdot)\coloneqq\diag[\vect(\cdot)]\), and $K^{(n,r-1)},K^{(r,r)}$ denoting the commutation matrices \citep[Sec.~3.7]{magnus99}. We can then compute the Riemannian gradient and Hessian-vector-product according to~(\ref{eq:RgRH}) and~(\ref{eq:y_U}). In the remainder of this section, we show how to efficiently solve~(\ref{eq:y_U}).

The manifold we consider, $\widetilde{\cM}\coloneqq\cV \times \Orth(r)$, can be written as
\begin{align*}
    \widetilde{\cM}= \mleft\{(V,Q) \in \bR^{n\times (r-1) }  \times \bR^{r \times r}:\vone_n^{\top}V=\vzero_{r-1},\tr(VV^\top) = K-1,\svect(QQ^\top - I_r)=\vzero\mright\},
\end{align*}
where $\svect$ denotes the symmetric vectorization \citep[Appx.~E]{klerk2006}.
Note that there are no cross terms ($VQ^\top$ or $QV^\top$) in the defining functions of $\cM$. Thus, we can treat the defining functions with respect to $V$ and $Q$ separately. The corresponding terms are
\begin{align*}
\cA_V(VV^\top)\coloneqq\begin{bmatrix}
\vzero_{r-1} \\ \tr(VV^\top)
\end{bmatrix}, \qquad \cB_V(V)\coloneqq\begin{bmatrix}
\vone_n^{\top}V \\ 0
\end{bmatrix},\qquad \cA_Q(QQ^\top)\coloneqq\svect(QQ^\top)
\end{align*}
and 
\[c_V\coloneqq\begin{bmatrix}
\vzero_{r-1} \\ K-1
\end{bmatrix}\quad\text{and}\quad c_Q\coloneqq \svect(I_r).\]

Mimicking ~(\ref{eq:L}) and~(\ref{eq:L*}), we use the notation $L_V$, $L_Q$, $L_V^*$, and $L_Q^*$ respectively. For any $(y_1, y_2) \in \bR^{r-1} \times \bR$ and $y_3 \in \bR^{r(r+1)/2}$, we have
\[
L_V^*(y_1,y_2) = \vone_n y_1^\top +2y_2 V\quad\text{and}\quad L_Q^*(y_3) = 2 \smat(y_3)Q,
\]
where $\smat$ is the inverse of $\svect$, that is, $\smat(\svect(M)) = M$ for all symmetric matrices $M$. By solving the linear systems \[G_V - L_V^*(y_1, y_2) \in \ker(L_V)\quad\text{and}\quad G_Q - L_Q^*(y_3) \in \ker(L_Q),\]we obtain the following closed form solutions:
\begin{equation}\label{eq:L_multipliers}
    \tilde{y}_1 = \frac{1}{n} G_V^\top \vone_n,\qquad \tilde{y}_2 = \frac{\inner{G_V}{V}}{2(K-1)},\qquad \tilde{y}_3 = \frac{1}{4}\svect(G_Q Q^\top+QG_Q^\top). 
\end{equation}

The computation of $\tilde{y}_1$, $\tilde{y}_2$, and $\tilde{y}$ in total requires $O(nr+r^3)$ time. Therefore, we can compute $y_U = -(\tilde{y}_1,\tilde{y}_2,\tilde{y}_3)$ with the same cost. Given a Euclidean gradient and a Euclidean Hessian-vector product, we may write out explicitly the Riemannian gradient:
\begin{equation}\label{eqn:}
\grad f(V,Q) =\begin{bmatrix}
\mleft(I_n - \dfrac{1}{n}\vone_n\vone_n^\top\mright) G_V- \dfrac{\inner{G_V}{V}}{K-1} V & \dfrac{G_QQ^\top - QG_Q^\top}{2}Q
\end{bmatrix}
\end{equation}
and the Riemannian Hessian-vector product:
\begin{align*}
    \Hess f(V,Q)[\dot{V},\dot{Q}] =\begin{bmatrix}
    \Proj_V\mleft(\nabla^2f(V,Q)[\dot{V}, \dot{Q}]_V -\dfrac{\inner{G_V}{V}}{K-1}\dot{V}\mright)\\
    \Proj_Q\mleft(\nabla^2f(V,Q)[\dot{V}, \dot{Q}]_Q - \dfrac{G_QQ^\top + QG_Q^\top}{2}\dot{Q}\mright)       
    \end{bmatrix}^\top.
\end{align*}

\subsection{Feasible initial point}\label{subsec:feas}
In this section, we show that $r>K$ is necessary and sufficient for the existence of an interior point of $\cM$. The following Lemma~\ref{lem:feasibility} shows the necessity of $r>K$. When $r=K$, the structure of the unique $U\in\bR_{+}^{n\times K}$
in Lemma~\ref{lem:membership2clusterlabel} can be explicitly written as
\begin{equation}\label{eqn:group_assignment_mat}
    U=\mleft[\frac{1}{\sqrt{\abs{G_{1}}}}\vone_{G_{1}},\ \frac{1}{\sqrt{\abs{G_{2}}}}\vone_{G_{2}},\dotsc,\ \frac{1}{\sqrt{\abs{G_{K}}}}\vone_{G_{K}}\mright],
\end{equation}
where $\vone_{G_{k}}\in\{0,1\}^{n}$ denotes the binary vector with
its support being $G_{k}$.
\begin{lem}[Isolated feasibility when $r=K$]
\label{lem:feasibility} Let $\cM_{+}=\cM\cap\bR_{+}^{n\times K}$
and $\cM'_{+}=\cM'\cap\bR_{+}^{n\times K}$, where $\bR_{+}^{n\times K}=\{U\in\bR^{n\times K}:U\geq0\}$.
Then, we have: (i) $\cM_{+}=\cM'_{+}$; (ii) $U\in\cM_{+}$ if and only if $U$ is a group assignment matrix defined in~(\ref{eqn:group_assignment_mat}); (iii) if $U$ is a group assignment matrix, then the intersection of the
tangent space $\rT_{U}\cM$ and the cone $\cC_{U}\coloneqq\{V\in\bR^{n\times K}:v_{ij}\geq0,\forall u_{ij}=0\}$
is trivial, i.e., $\rT_{U}\cM\cap\cC_{U}=\{0\}$.
\end{lem}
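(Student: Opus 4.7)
The plan is to prove (ii) first---that $\cM_+$ consists exactly of the group assignment matrices---since (i) is then an immediate corollary and (iii) reduces to a sign-accounting argument on the tangent-space equations combined with the cone constraint.

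For (ii), let $U\in\cM_+$ and set $Z=UU^\top$. The target is to show that $Z$ has the block-diagonal membership form of Lemma~\ref{lem:membership2clusterlabel}, which then identifies $U$ as a group assignment matrix. By construction $Z$ is symmetric, PSD, entrywise nonnegative, and satisfies $Z\vone_n=\vone_n$, $\tr(Z)=K$, and $\rank(Z)\le K$. Since $Z$ is also doubly stochastic, Perron--Frobenius gives $\lambda_{\max}(Z)=1$, so every eigenvalue lies in $[0,1]$; combined with $\rank(Z)\le K$ and $\tr(Z)=K$, all nonzero eigenvalues must equal $1$, and $Z$ is an orthogonal projection of rank exactly $K$. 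I then introduce the relation $i\sim j \iff Z_{ij}>0$, which is reflexive (since $Z_{ii}=\sum_k Z_{ik}^2>0$, as the $i$-th row of $Z$ cannot vanish with unit sum), symmetric, and transitive (since $Z=Z^2$ gives $Z_{il}=\sum_k Z_{ik}Z_{kl}\ge Z_{ij}Z_{jl}$). The induced partition $G_1,\dots,G_m$ makes $Z$ block-diagonal with entrywise-positive blocks, and each block $Z|_{G_l}$ is itself a symmetric PSD projection with $Z|_{G_l}\vone_{G_l}=\vone_{G_l}$. A max-principle argument then forces each block to have rank $1$: any unit eigenvector $w\perp\vone_{G_l}$ with $Z|_{G_l}w=w$ would, at its maximum index $i$, satisfy $w_i=\sum_j(Z|_{G_l})_{ij}w_j$ with strictly positive weights summing to $1$, forcing $w$ constant and hence zero. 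Thus $Z|_{G_l}=\vone_{G_l}\vone_{G_l}^\top/\abs{G_l}$, so $m=\rank(Z)=K$ and Lemma~\ref{lem:membership2clusterlabel} identifies $U$ as a group assignment matrix. Claim (i) is then immediate: group assignment matrices satisfy $U^\top U=I_K$, so (ii) gives $\cM_+\subseteq\cM'_+$; conversely any $U\in\cM'_+$ has $\tr(UU^\top)=\tr(U^\top U)=K$, so $U\in\cM_+$.

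For (iii), let $\dot U\in\rT_U\cM\cap\cC_U$, abbreviate $g_l\coloneqq\abs{G_l}$, and set $a_{l,k}\coloneqq\sum_{i\in G_l}\dot U_{ik}$. Differentiating the two defining equations of $\cM$ yields $\dot U U^\top\vone_n+U\dot U^\top\vone_n=\vzero$ and $\tr(U\dot U^\top)=0$. Summing the $i$-th row of the first equation over $i\in G_l$ and using the explicit form of $U$ yields
\[
2\sqrt{g_l}\,a_{l,l}+\sum_{k\neq l}\sqrt{g_k}\,a_{l,k}+\sqrt{g_l}\sum_{m\neq l}a_{m,l}=0.
\]
The cone constraint $\dot U_{ik}\ge 0$ whenever $i\notin G_k$ gives $a_{l,k}\ge 0$ for all $l\neq k$, making the last two sums nonnegative and forcing $a_{l,l}\le 0$. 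The trace condition reads $\sum_l a_{l,l}/\sqrt{g_l}=0$, which together with $a_{l,l}\le 0$ forces $a_{l,l}=0$; the displayed equation then reduces to a sum of nonnegative terms equalling zero, so $a_{l,k}=0$ for all $(l,k)$. For $l\neq k$ this kills every off-block entry $\dot U_{ik}$ (with $i\in G_l$). Reinserting these zeros into the row-wise tangent equation at any $i\in G_l$ collapses it to $\sqrt{g_l}\,\dot U_{il}=0$, so the in-block entries also vanish and $\dot U=0$.

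The main obstacle is the block-rank step inside (ii): showing that any symmetric PSD projection with entrywise positive entries and $\vone$ in its range must have rank exactly $1$. The remainder is a clean spectral/combinatorial reduction on $Z$ together with a purely algebraic sign analysis on the tangent and cone conditions.
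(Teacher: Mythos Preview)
Your argument is correct in all three parts, but the organization and the key mechanisms differ from the paper's proof.

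For (i)--(ii), the paper proceeds in the opposite order: it first proves (i) directly by the same spectral observation you use (eigenvalues of the doubly stochastic PSD matrix $UU^\top$ lie in $[0,1]$, sum to $K$, and there are at most $K$ nonzero ones, so all nonzero eigenvalues equal $1$ and hence $U^\top U=I_K$). With $U^\top U=I_K$ in hand, (ii) is immediate: orthogonality of nonnegative columns forces disjoint supports $G_k=\operatorname{supp}(u_k)$, after which $UU^\top\vone_n=\vone_n$ and $\norm{u_k}=1$ pin each column to $\vone_{G_k}/\sqrt{|G_k|}$. Your route instead works entirely on $Z=UU^\top$, building the block structure via the equivalence relation induced by $Z=Z^2$ and then using a Perron-type maximum argument to show each block has rank one; you then read (i) off as a corollary of (ii). Both are valid; the paper's path is shorter once one notices that (i) gives column orthogonality for free, while yours is more self-contained in that it never appeals to $U^\top U=I_K$ and instead recovers the full membership structure of $Z$ directly.

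For (iii), the paper writes out the entries of $UV^\top$ explicitly and argues by contradiction: it shows the $j$-th component of $(UV^\top+VU^\top)\vone_n$ equals $\sqrt{|G_{A(j)}|}\,v_{j,A(j)}$ plus a nonnegative remainder, and then observes that any nonzero $V$ in the cone forces some such component to be strictly positive. Your aggregation via $a_{l,k}=\sum_{i\in G_l}\dot U_{ik}$, followed by the sign-plus-trace squeeze to kill all $a_{l,k}$ and then a return to the individual row equations, is a cleaner and more systematic bookkeeping of the same underlying inequalities; it avoids the case analysis and contradiction step entirely.
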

In Lemma~\ref{lem:analytic_init}, we moreover provide a complete analytical construction for an interior point of $\cM$ when $r>K$. Here, we present the construction when $n = qr$ is an integer multiple of $r$. Let $U_0 = (x-y)I + y \vone_n\vone_n^\top$, where $x = r^{-1}(1+\sqrt{(r-1)(K-1)})$ and $y = r^{-1}(\sqrt{r-1}-\sqrt{K-1})$.

Then $U = \hat{\vone}_q \otimes U_0$ is an interior point of $\cM$. 
Next, we show how to compute the pair $V$, $Q$ corresponding to the interior point $U$ by SVD. For a given $U \in \cM_r$, let $U = P_U \Sigma Q_U^\top$ be the SVD of $U$. We can find $(V, Q)$ such that $U = \hat{V}Q$ by $V=\operatorname{sgn}({P_U}_{(1,1)}){[P_U\Sigma]}_{(:,2:r)}$ and $Q = \operatorname{sgn}({P_U}_{(1,1)})Q_U^\top$.

\subsection{Lipschitz continuity of penalty \label{appsec:Lipschitz}}
To apply the guarantees in \Cref{subsec:Riemann}, we need to take care of the logarithmic penalty in~(\ref{eq:manif2}) since it does not have Lipschitz gradients nor Hessians over its whole domain. The standard workaround, widely used in the analysis of nonlinear interior-point methods, is to observe that all iterates $U_{k}=\varphi(V_{k},Q_{k})$ remain strictly feasible.
Consequently, the penalty could be modified by a Huber-style smoothing, where $\delta=\min_{i,j,k}(U_{k})_{i,j}>0$:
\[
r(x)=\begin{cases}
\log x & x\ge\delta\\
\log\delta+\dfrac{(x-\delta)}{\delta}-\dfrac{(x-\delta)^{2}}{2\delta^{2}}+\dfrac{(x-\delta)^{3}}{2\delta^{3}} & x<\delta
\end{cases}
\]
The function $r(x)$ is both concave and has Lipschitz Hessians. Therefore, the guarantees in \Cref{subsec:Riemann} apply. The smoothing is only needed for theoretical purposes. In practice, we apply the Riemannian algorithms directly to $\log x$, and not to $r(x)$. Since we have assumed that all queries satisfy $x\ge\delta$, the actual behavior remains consistent with the smoothed model. 

\section{Efficient Implementation and Cost of Bisection Search}\label{appsec:proof_bisect}
To implement the proposed method, we vectorize the input as $u=(v, q)$, with $v\coloneqq\vect(V^\top)$ and $q\coloneqq\vect(Q)$. Since $C=-XX^\top$, we rewrite the cost function form \Cref{subsec:RgRH} as \[f(u)=-\norm{X^\top V}^2-\mu\vone_n^\top\log\bigl(\varphi(V,Q)\bigr)\vone_n,\] and define the constraint functions
\[g_1(u)=\vone_n^\top V,\qquad g_2(u)=\norm{V}^2-(K-1),\qquad g_3(u)=\operatorname{svec}(Q Q^\top-I_r).\] The Jacobian \(J\) and Hessian \(H\) are computed analytically, as in \Cref{subsec:RgRH}, in order to exploit their sparsity.

Since we have restricted ourselves to optimizing over $p \in \rT_u\widetilde{\cM}_r$, we may replace the Riemannian Hessian $H$ with $\tilde{H}\coloneqq\nabla_{uu}^2\cL(u, y_u)$. The key observation is that $\tilde{H}$ admits a block-diagonal-plus-low-rank structure.

For convenience, we list some of the derivatives in this section. The (Euclidean) Jacobian of the constraints can be written in block form as \[J=\begin{bmatrix}
J_{1v} & \bm{0}\\
J_{2v} & \bm{0}\\
\bm{0} & J_{3q}
\end{bmatrix},\] where \[J_{1v}=(I_{r-1}\otimes\vone_n^\top)K^{(r-1,n)},\qquad J_{2v}=2v,\qquad J_{3q}= S_r(I+K_{r,r})(Q \otimes I_r),\] where $S_r \in \bR^{r(r+1)/2 \times r^2}$ is the matrix satisfying $S\vect(Q) = \svect(Q)$. 

Computing the second-order derivatives of $g_1,g_2$ is straightforward, since $J_{1v}$ is constant in $V$ and $J_{2v}$ is linear. To compute the second-order derivatives of $g_3$, since we only need to compute $\sum_{j\in[r(r+1)/2]} \tilde{y}_{3,j} \nabla^2 g_{3,j}(q) $, we note that for any $y \in \bR^{r(r+1)/2}$, 
\[
J_{3q}^\top y = 2\vect(\smat(y)Q).
\]
Consequently, we have for any $y \in \bR^{r(r+1)/2}$,
\[
\sum_{j\in[r(r+1)/2]} y \nabla^2 g_{3,j} = 2I_r \otimes\smat(y).
\]

Collecting results, we have
\[\tilde{H}=\begin{pmatrix}
H_{vv} - BB^\top & H_{vq}\\
H_{qv}  & H_{qq}
\end{pmatrix},\]
with 
\begin{align}
B&=\sqrt{2}(X\otimes I_{r-1})\\
H_{vv}&=\mu (I_n\otimes \hat{Q})\dvec\mleft(U_{(-2)}^\top\mright)(I_n\otimes \hat{Q}^\top) +  2 \tilde{y}_2 I,\\
H_{qq}&=\mu (I_r\otimes\hat{V}^\top)\dvec\mleft(U_{(-2)}\mright)(I_r\otimes\hat{V}) + 2I_r \otimes\smat(\tilde{y}_3),
\end{align}
and
\begin{equation}
H_{vq}=-\mu (U_{(-1)}\otimes D)+ \mu(I_n\otimes\hat{Q})K^{(n,r)}\dvec\mleft(U_{(-2)}\mright)(I_r\otimes\hat{V}) =H_{qv}^\top,
\end{equation} where $\tilde{y_2}, \tilde{y}_3$ follow~(\ref{eq:L_multipliers}). 

To solve the saddle point problem arising from subproblem \ref{eqn:newton_cube} via bisection search, we solve the linear system
\[
\mleft[\begin{array}{c|cc}
H_{vv}+\lambda I-BB^\top & H_{vq}  & J_{v}^\top \\
\hline H_{qv} & H_{qq}+\lambda I & J_{q}^\top \rule{0pt}{2.0ex}\\
J_{v} & J_{q} & \vzero
\end{array}\mright]\mleft[\begin{array}{c}
\dot{v}\\
\hline\dot{q}\rule{0pt}{2.0ex}\\
r
\end{array}\mright]=\mleft[\begin{array}{c}
-g_{v}\\
\hline -g_{q}\rule{0pt}{2.0ex}\\
\vzero
\end{array}\mright].
\]
Repartitioning along the lines yields:
\[
\mleft[\begin{array}{c|c}
K_{11} & K_{12} \\\hline
K_{21} & K_{22} \rule{0pt}{2.0ex} 
\end{array}\mright]\mleft[\begin{array}{c}
x_{1}\\
\hline x_{2} \rule{0pt}{2.0ex}
\end{array}\mright]=\mleft[\begin{array}{c}
b_{1}\\
\hline b_{2} \rule{0pt}{2.0ex}
\end{array}\mright]
\]
with $x_{1},b_{1}\in\bR^{n(r-1)}$ and $x_{2},b_{2}\in\bR^{r^{2}+m}$.
In particular, we observe that the block $K_{11}$ has the form $K_{11}=D_{11}-BB^\top$,
where $D_{11}=H_{vv}+\lambda I$ is block-diagonal, with $n$ blocks of $r-1$, and
$B$ has at most $dr$ columns. Therefore, we instead solve
\[
\mleft[\begin{array}{c|cc}
D_{11} & K_{12}  & B\\
\hline K_{21} & K_{22} & \vzero \rule{0pt}{2.0ex}\\
B^\top  & \vzero & I
\end{array}\mright]\mleft[\begin{array}{c}
\dot{v}\\
\hline \dot{q} \rule{0pt}{2.0ex}\\
z
\end{array}\mright]=\mleft[\begin{array}{c}
b_{1}\\
\hline b_{2} \rule{0pt}{2.0ex}\\
\vzero
\end{array}\mright].
\]
First, it costs $n(r-1)^{3}=O(nr^{3})$ time to invert $D_{11}$.
Afterwards, forming and solving the size $m+r^{2}+rd$ Schur complement problem:
\begin{equation}\label{eqn:schur}
\mleft(L_{22}-L_{12}^\top D_{11}^{-1}L_{12}\mright)\begin{bmatrix}\dot{q}\\
z
\end{bmatrix}
=\begin{bmatrix}b_{2}\\
\vzero
\end{bmatrix}-L_{12}^\top D_{11}^{-1}b_{1},
\end{equation} where \[L_{12}\coloneqq\begin{bmatrix}K_{21} & B\end{bmatrix}\qquad L_{22}\coloneqq\begin{bmatrix}K_{22} & \vzero\\
\vzero & I
\end{bmatrix},\] cost $O(nr^{3}(d+r)+r^{6}+r^{3}d^{3})$ time. In the end, we substitute
to recover
\[
\dot{v}=D_{11}^{-1}\mleft(b_{1}-L^{12}\begin{bmatrix}\dot{q}\\
z
\end{bmatrix}\mright)
\]
in $O(nr^{3}(d+r))$ time, and apply retractions to $\dot{v}$ and $\dot{q}$. In total, it takes $O(nr^{3}(d+r)+r^{6}+r^{3}d^{3})$
time to solve the system, which is indeed $n\cdot\poly(r,d)$. Putting pieces together, a pseudo-code of our Riemannian method is shown in \Cref{alg:riemann_tr}.

\section{Additional Numerical Results}\label{appsec:add_num}
We collect additional numerical results in this section.

\paragraph{Dataset visualization.} \Cref{fig:gmm_vis} and \Cref{fig:cytof_vis} display the first two principal components of the GMM dataset and CyTOF dataset, respectively.

\begin{figure}[!htb]
    \centering
    \includegraphics[width=0.8\textwidth]{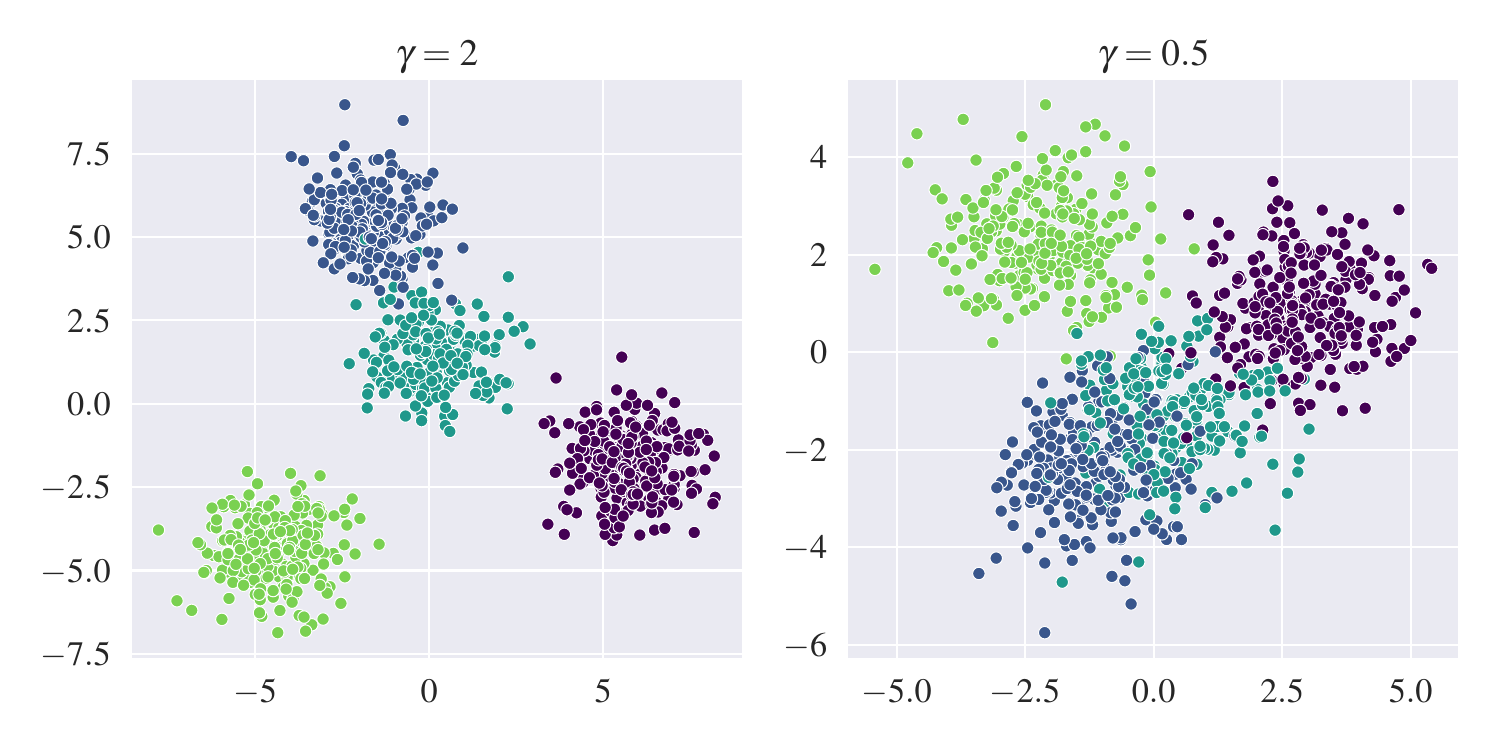}
    \caption{Visualizing the effect of the separation parameter in GMMs. As \(\gamma\) decreases, the clusters become increasingly difficult to distinguish.\label{fig:gmm_vis}}
\end{figure}

\begin{figure}[!htb]
    \centering
    \includegraphics[width=0.4\textwidth]{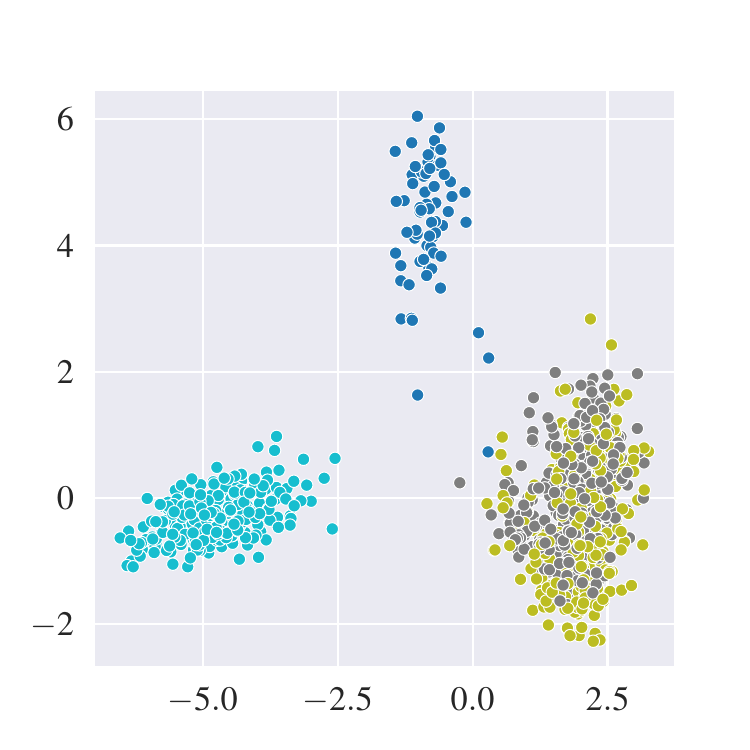}
    \caption{Visualization of the CyTOF dataset. Two clusters exhibit significant overlap, implying the difficulty of clustering.\label{fig:cytof_vis}}
\end{figure}

\paragraph{Benchmark on CIFAR10.}
\Cref{fig:cifar_result} shows the performance of our method on the CIFAR10 image classfication dataset. The original $32\times32$ color images were processed using a pre-trained Vision Transformer (ViT-B-16) \citep{dosovitskiy2021an} to extract 768-dimensional features, which were then reduced to $d=50$ using PCA. We use a pre-trained model to avoid breaching the unsupervised setting. From the dataset, 25000 images across five classes were selected; In each trial, we draw a subsample of 1,000 images and perform clustering. We repeated this procedure 50 times.
\begin{figure}[!htbp]
    \centering
    \includegraphics[width=0.8\textwidth]{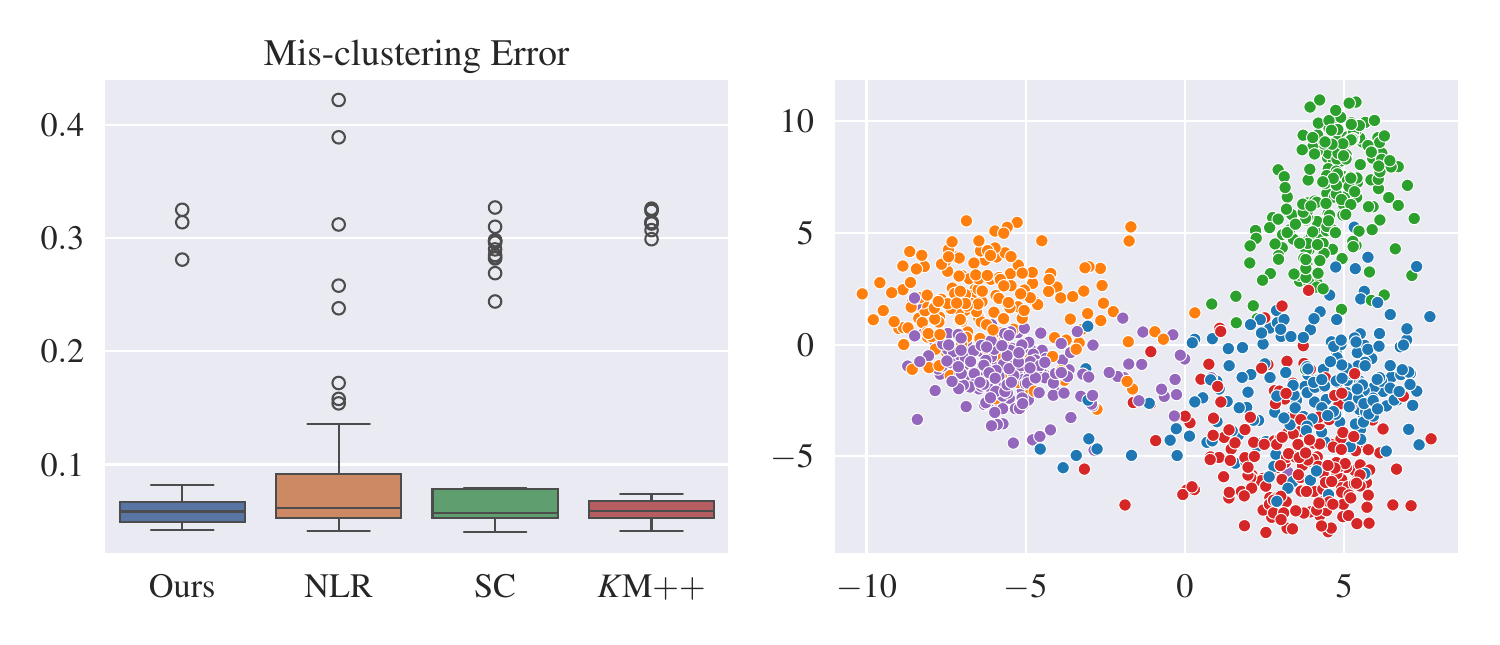}
    \caption{Real-world benchmark on the CIFAR10 data. (Left): comparison to other state-of-the-art methods. (Right): first two principal components of the image embeddings. \label{fig:cifar_result}}
\end{figure}

\paragraph{Generalization to kernelized $K$-mean.} Since the data enter the problem through the Gram matrix $C=-XX^\top$ in \Cref{eq:manif}, extending the implementation to the kernel $K$-means is straightforward. We adapted our implementation to accept a kernel matrix as input and conducted experiments using the RBF kernel on two toy datasets. The results are shown in \Cref{fig:rbf_result}.
\begin{figure*}[!htb]
    \centering
    \includegraphics[width=0.9\textwidth]{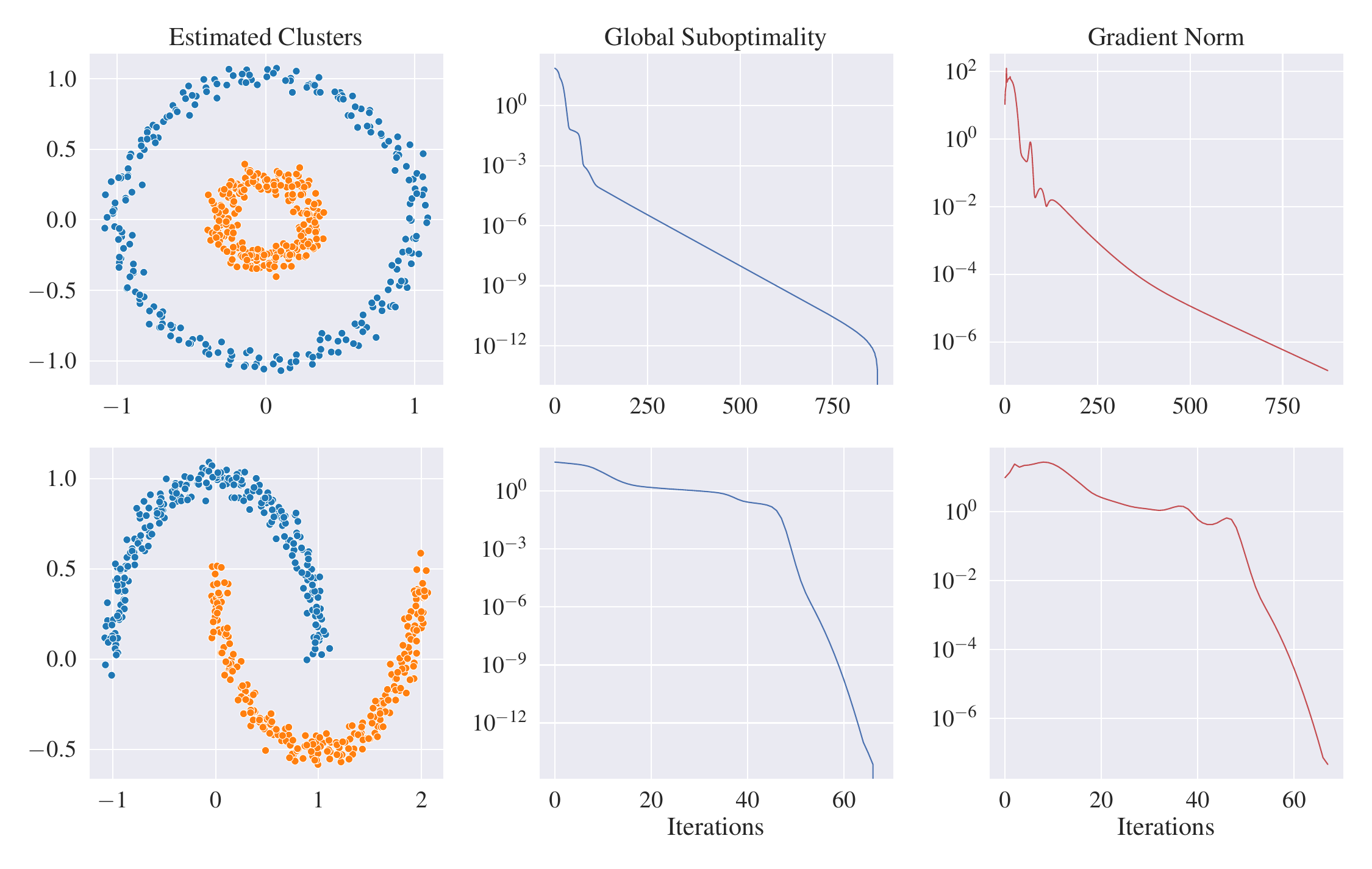}
    \caption{Clustering with the RBF kernel. Demonstrated on the Circles and Moon datasets from \texttt{scikit-learn}. The kernel bandwidths are 3 and 15, respectively. \label{fig:rbf_result}}
\end{figure*}

Nevertheless, without access to the factored matrix, we can no longer perform the sparse partition described in \Cref{appsec:proof_bisect}, and therefore we lose the linear-time guarantee for solving the subproblem. To extend the linear-time core claim would further require low-rank approximations of kernel matrices $C$, e.g.~via Nyström. We leave this as important future work outside the scope of this particular paper.

\paragraph{Impact of imbalanced clusters.}
\Cref{fig:perf_imb} shows the impact of imbalanced cluster sizes. In this example, the clusters contain approximately $10\%, 20\%, 20\%,$ and $50\%$ of the $n=1000$ data points. We compared our method with NLR and spectral clustering (SC), omitting $K$-means++ due to its substantially higher error, which would distort the scale of the results. As the imbalance among clusters increases, we noted that the convergence becomes less stable with the default parameters, so we increase the rank $r$ from $K+1$ to $K+3$. The new imbalanced cluster size experiment further reveals two things: (i) the performance of all methods under comparison degraded, and (ii) our method is the most robust (with the narrowest error distribution) among all.
\begin{figure}[!hbtp]
    \centering
    \includegraphics[width=\textwidth]{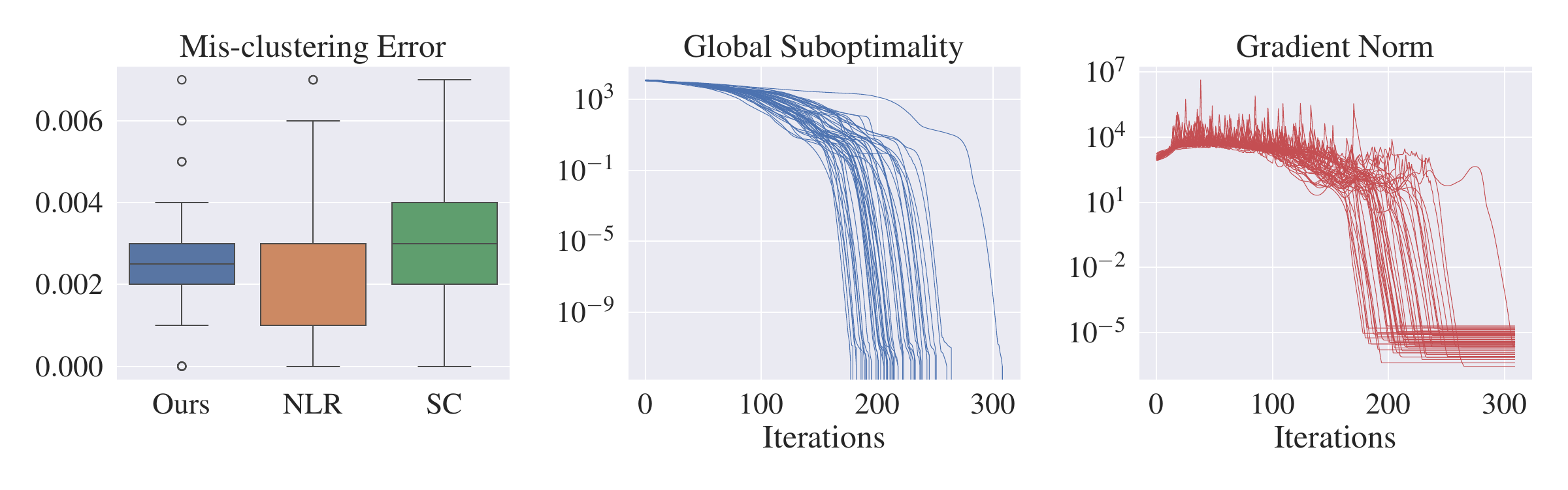}
    \caption{Clustering performance and convergence behavior when cluster sizes are imbalanced. \label{fig:perf_imb}}
\end{figure}

To explain this empirical observation, it is known that a similar (and more general) recovery threshold in the motivating $K$-means SDP to \Cref{eqn:recovery_threshold} holds for unbalanced cluster sizes. In particular, the exact recovery threshold depends on the minimum of harmonic means of the cluster sizes, i.e.~$m\coloneqq\min_{1 \leq k \neq l \leq K}(2 n_k n_l)/(n_k + n_l)$, in the following way \citep{chen2021cutoff}:
\[
\overline{\Theta}^2 \gtrsim  1 + \sqrt{1+ \frac{d}{m \log{n}}}.
\]
In the special case of $K$ equal clusters, $m = n / K$ attains the maximum possible value. Hence, when the cluster imbalance is significant, a larger centroid separation is required for the proposed algorithm to achieve clustering accuracy and convergence behavior comparable to that observed in the equal-cluster case.

\paragraph{Runtime scaling with respect to $r$ and $d$.} \Cref{fig:r_vs_d} shows the average per-iteration runtime as as $r$ and $d$ vary. The setting is the same as that of \Cref{fig:nlr_vs_tr}, with $n=1000$ fixed. The observed scalings for $r$ and $d$ are roughly $O(r^3)$ and $O(d)$, consistent with the leading terms of $O(nr^{3}(d+r)+r^{6}+r^{3}d^{3})$. Because $n$ dominates as a leading constant, the contributions of the $r^6$ and $r^3d^3$ terms are unlikely to become significant until $r$ and $d$ are extremely large.
\begin{figure}[!htb]
    \centering
    \includegraphics[width=0.8\textwidth]{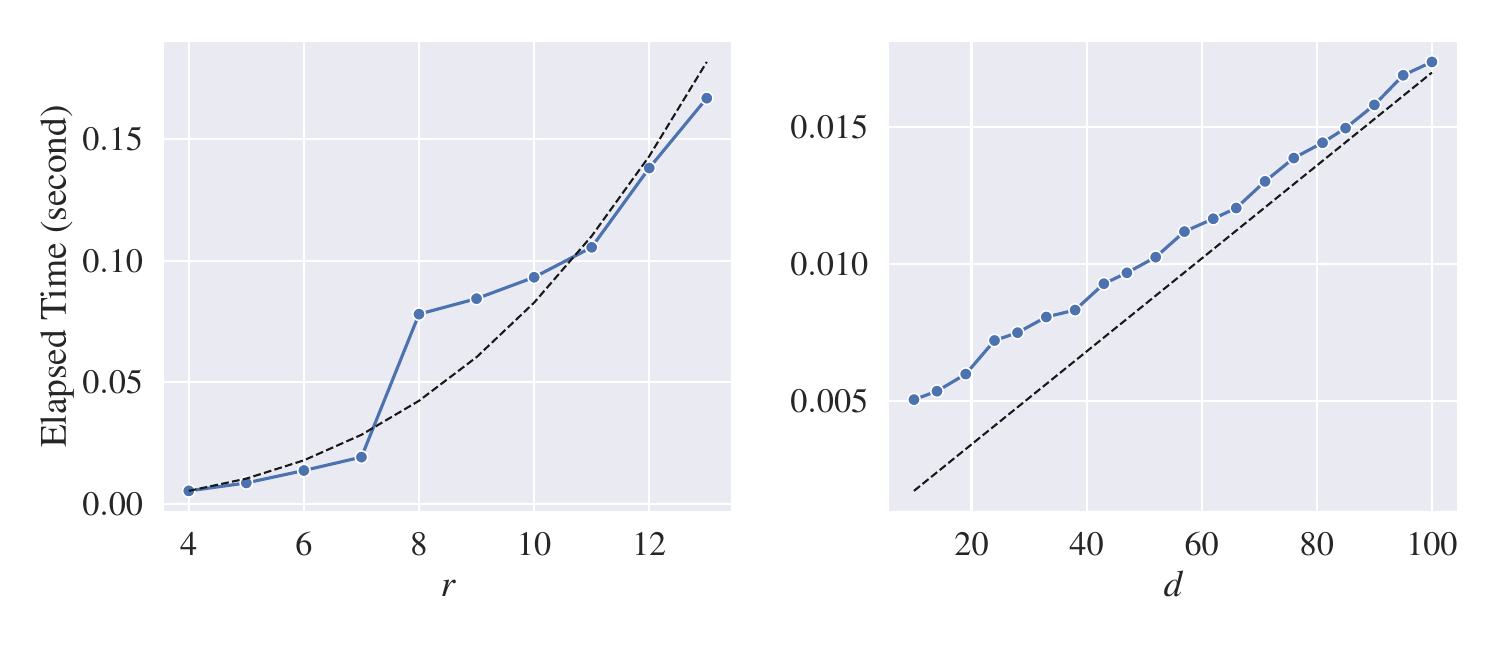}
    \caption{Average per-iteration runtime versus rank and dimension, $n=1000$. The measured runtimes exhibit approximately  $O(r^3)$ and $O(d)$ scaling (dashed lines). \label{fig:r_vs_d}}
\end{figure}

\paragraph{Robustness to initialization.} As illustrated by \Cref{fig:2opt}, our method is robust to initialization, all 50 trials successfully converged to second-order optimal solutions. Although the solutions differ (\Cref{fig:init_U}), their corresponding membership matrices $Z$ are close to each other (\Cref{fig:init_Z}), and yield identical clustering result. Moreover, the minimum eigenvalues upon convergence form distinct clusters that align with clusters in the recovered membership matrix $Z$, as shown in \Cref{fig:init_Z}. These local critical points consistently produce perfect clustering, indicating that they remain close to the global optimum.

\begin{figure*}[!htb]
    \centering
    \includegraphics[width=\textwidth]{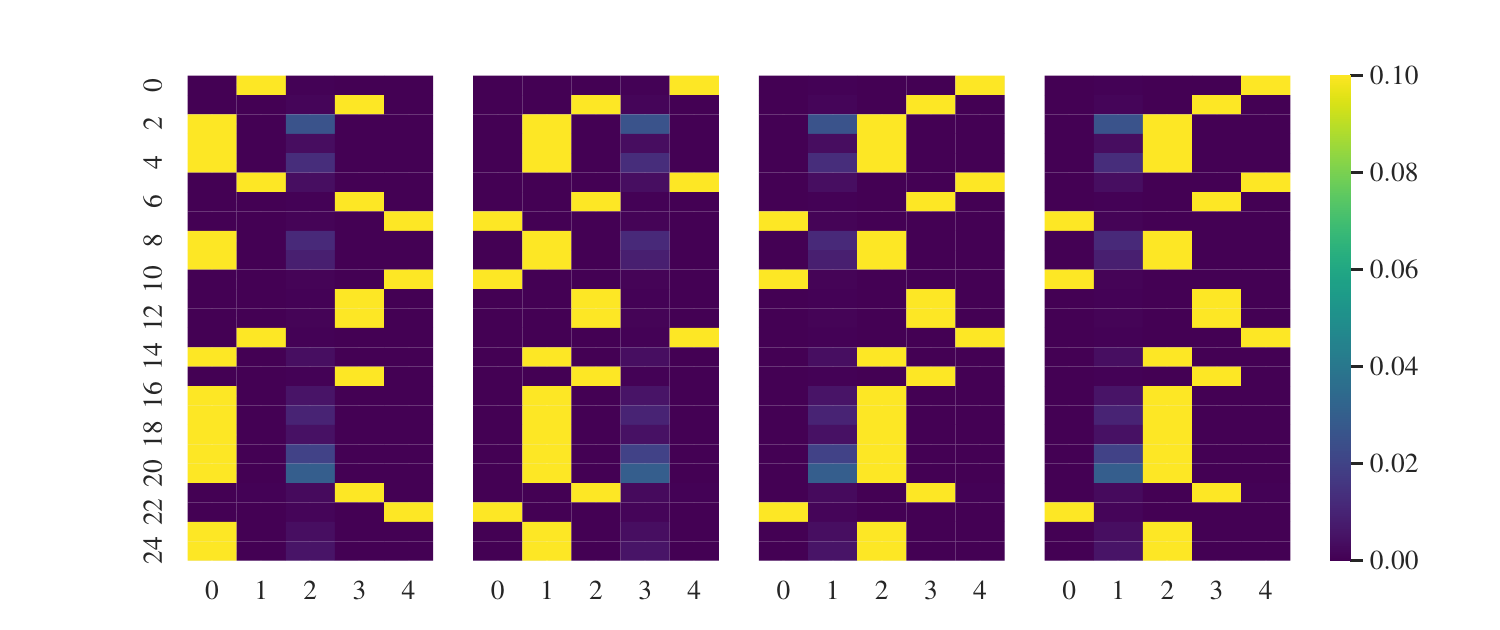}
    \caption{\textbf{Difference between the solutions.} First 25 rows of selected solution $U$ obtained from the global optimality experiment described in \Cref{sec:numerics}.}\label{fig:init_U}
\end{figure*}

\begin{figure*}[!htb]
    \centering
    \includegraphics[width=0.7\textwidth]{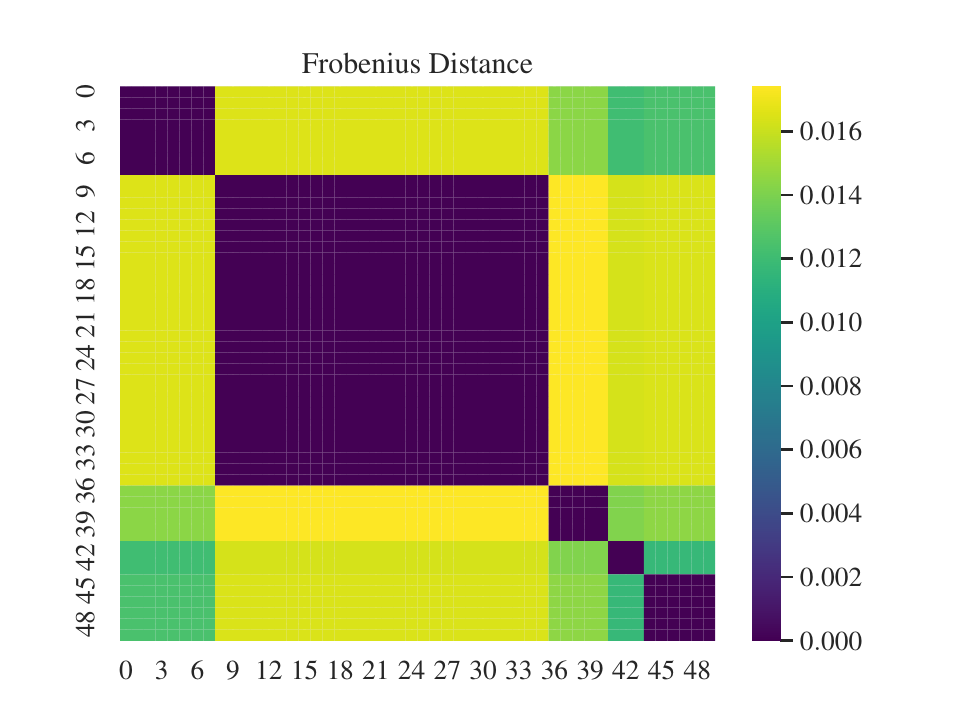}
    \caption{\textbf{Similarities of the membership matrices.} Frobenius distances between the membership matrices $Z$ obtained from the global optimality experiment in \Cref{sec:numerics}, sorted according to their corresponding minimum Hessian eigenvalues.}\label{fig:init_Z}
\end{figure*}

\paragraph{Comparison with another Riemannian clustering method.} Inexact Accelerated Manifold Proximal Gradient Method (I-AManPG) by~\citet{Huang2025} is a recent first-order Riemannian method for solving general problems of the form 
\begin{mini}
{\scriptstyle X}{f(X)+\lambda\norm{X}_1,}{}{}
\addConstraint{X}{\in\mathcal{F}_v}{\coloneqq\{X:X^\top X=I,v\in\operatorname{span}(X)\}.}
\end{mini}
We evaluated its performance on the clustering problem using the on the CyTOF dataset with 50 repetitions (same setting as in~\Cref{fig:cytof_perf}). The results are shown in \Cref{fig:iampg}. While I-AManPG is generally fast and accurate, its median error is higher than that of our methods. In particular, several runs of I-AManPG exhibited large errors, indicating convergence failures.

\begin{figure*}[!htbp]
    \centering
    \includegraphics[width=0.7\textwidth]{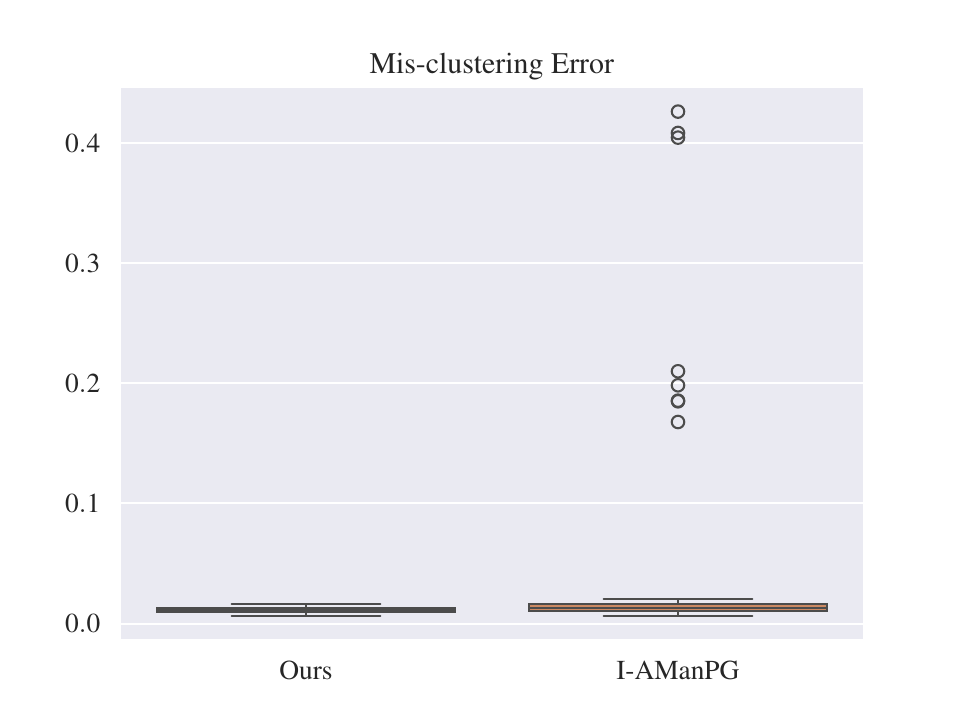}
    \caption{\textbf{Comparison with I-AManPG using CyTOF.} Performance of I-AManPG is comparable to other clustering methods. However, it suffers from convergence failures from time to time and requires careful tuning. Our method again demonstrated its accuracy and stability.}\label{fig:iampg}
\end{figure*}

\paragraph{Hyperparameters tuning.} In the various numerical experiments, we observed that a smaller value of \(\mu\) led to more accurate solutions but at the cost of slower convergence. Therefore, we recommend selecting the largest possible \(\mu\) that does not trigger the phase transition. A good heuristic we found is to choose such that the initial penalty term \(\mu f_2\) remains less than 20 times the main term \(f_1\) in the loss function. The onset of phase transition is also easy to notice, as the algorithm will quickly stagnate and terminate in just a few iterations. If higher accuracy is desired, one can reduce \(\mu\) gradually, using the solution obtained with a larger \(\mu\) as initialization. This warm-start strategy significantly speeds up convergence compared to using a small from the start.

As noted earlier, increasing $r$ can also improve accuracy, likely because it improves the problem landscape. However, due to the $\poly(r,d)$ runtime scaling, caution must be taken when deciding whether to decrease $\mu$ or increase $r$. If tuning $(\mu,r)$ together is desired, we suggest the following strategy: begin by decreasing $\mu$ with $r=K+1$; if this does not produce satisfactory convergence or leads to excessive computation time, try increasing $r$ to $K+2,\dotsc,K+c$ for some small $c$, and repeat the $\mu$-tuning process.

The other hyperparameters in \Cref{alg:riemann_tr} primarily influence the speed of the inner optimization. The initial multiplier $\lambda$ affects only the number of inner steps required during the first iteration. We recommend doing a simple trial run with only two iterations; the resulting optimization history typically offers a reliable guide for choosing an appropriate initial scale for $\lambda$. For the other two parameters, we suggest setting $\kappa_-$ slightly smaller than $\kappa_+$. Empirically, we found $\kappa_-=1.1$ and $\kappa_+=1.3$ work well.

\paragraph{Additional convergence plots.} \Cref{fig:cvg_bhvr_gamma=100}, \Cref{fig:cvg_bhvr_gamma=025}, and \Cref{fig:cvg_bhvr_cytof} illustrate the convergence of our method on GMM with different parameters and on the CyTOF dataset, demonstrating its stability across different datasets.

\begin{figure*}[!htbp]
    \centering
    \includegraphics[width=\textwidth]{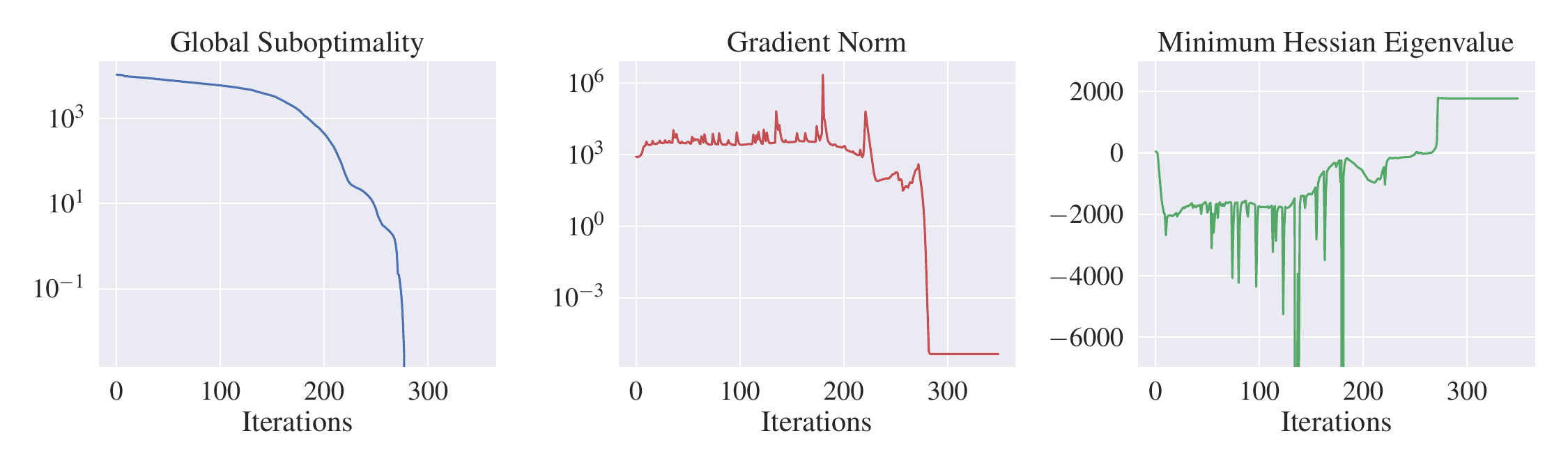}
    \caption{Convergence of our method on GMM data with perfect separation ($n=500,\gamma=1.0$). The loss value steadily decreases over iterations and converges rapidly near the optimal point. This example achieved a perfect final clustering result in the end.}\label{fig:cvg_bhvr_gamma=100}
\end{figure*}

\begin{figure*}[!htbp]
    \centering
    \includegraphics[width=\textwidth]{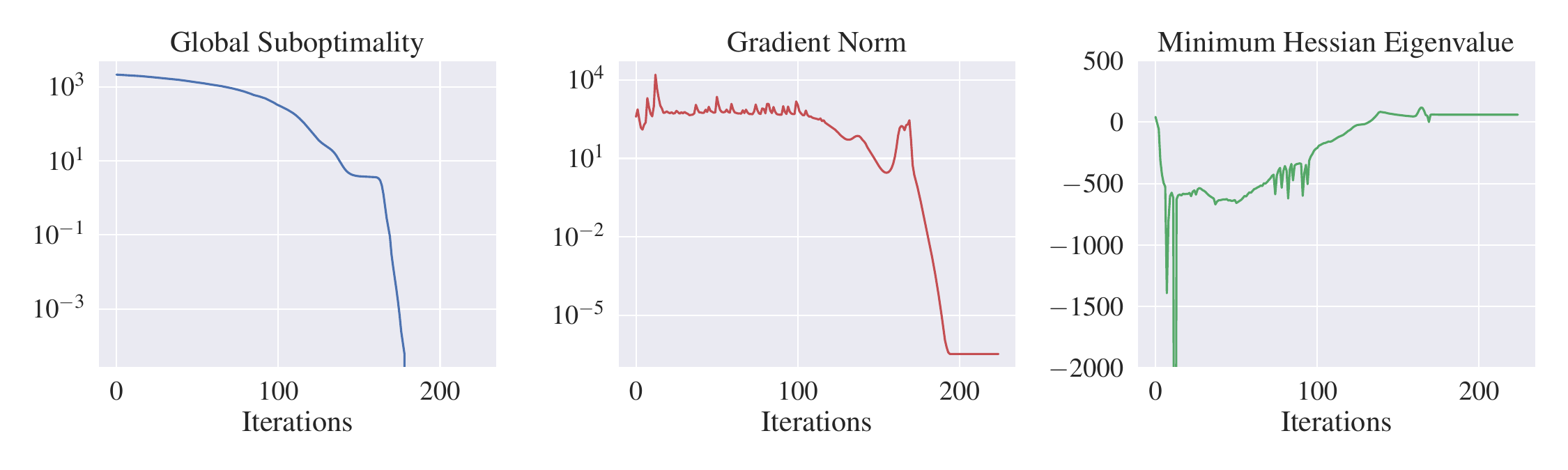}
    \caption{Convergence of our method on synthetic Gaussian mixture data with low separation ($n=500,\gamma=0.25$).}\label{fig:cvg_bhvr_gamma=025}
\end{figure*}

\begin{figure}[!htbp]
    \centering
    \includegraphics[width=\textwidth]{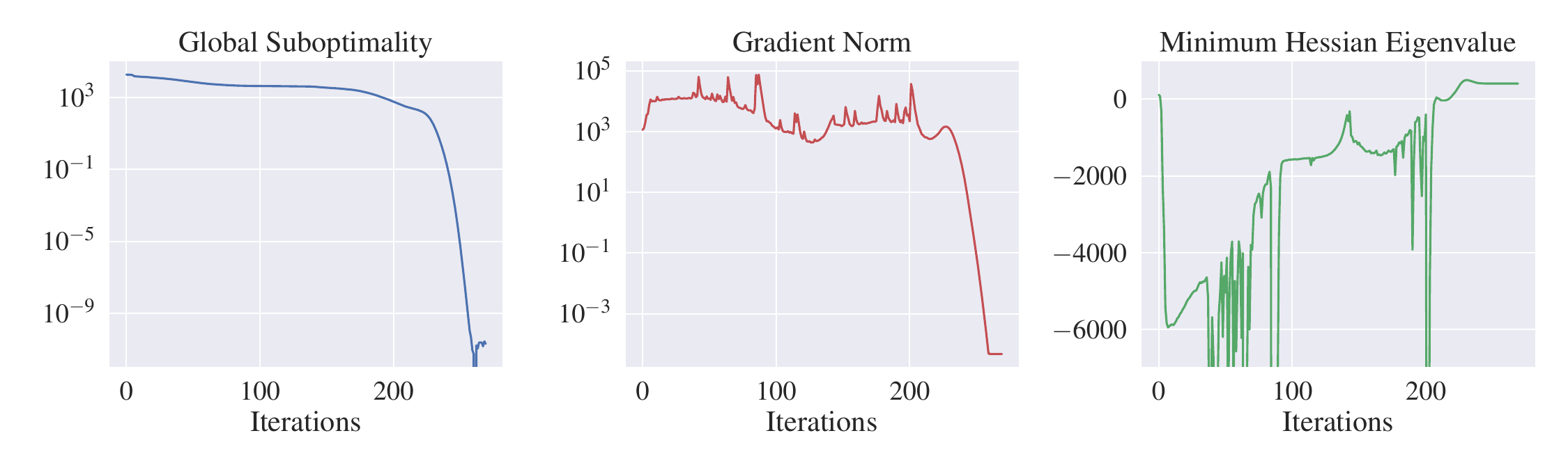}
    \caption{Convergence behavior of our method on the CyTOF dataset.}\label{fig:cvg_bhvr_cytof}
\end{figure}

\paragraph{Hardware information.} All experiments in this work were conducted on a machine equipped with a single Intel Core i9-14900K CPU and 32 GB of RAM.

\paragraph{License of assets.}
The \texttt{MANOPT} solver is distributed under the terms of the GPLv3 license; the \texttt{PYMANOPT} solver is released under the 3-Clause BSD license; the CyTOF dataset is the work of \citet{LEVINE2015184}, cleaned and distributed by \citet{CyTOFClean} under the MIT license.

\paragraph{Code.} A Python demonstration of the proposed method is available at \url{https://github.com/Francis-Hsu/kmeans_manifold}. This repository provides example scripts for the GMM and CyTOF experiments presented in this work.

\section{Pseudocode}\label{appsec:RTR_algo}
This section presents the pseudocode of the proposed method. For its derivation, see~\Cref{appsec:proof_bisect}.
\begin{algorithm}[!htbp]
\caption{Riemannian Second-order Method \label{alg:riemann_tr}}
\begin{algorithmic}[1]
\Require Data \(X\), \\ 
Initial point $(V_0, Q_0)$,\\ 
Initial multiplier $\lambda$,\\
Increment/decrement factors $(\kappa_+,\kappa_-)$,\\
log-barrier penalty $\mu$, \\
Max number of outer/inner iterations $T$ and $B$.
\State  \((V, Q)\gets(V_0, Q_0)\)
\For{$i=1, \dotsc, T$} 
	\State Vectorize the input: \(u\gets[\vect(V^\top),\vect(Q)]^\top\);
    \State Compute the current loss: $\cL=f(u)$;
	\State Compute the Riemannian gradient \(G\gets\nabla f(u)\);
    \State Compute the Jacobian \(J\gets \Diff g(u)\);
	\State Compute \(\tilde{H}=\nabla_{uu}^2\cL(u,y_u)\) as in \Cref{appsec:proof_bisect};
	\For{$j=1, \dotsc, B$}  
		\State \(\dot{v},\dot{q}\gets\Call{SolveInner}{\tilde{H}, G, J,\lambda}\)
		\State Reconstruct \(\dot{V},\dot{Q}\) from vector \(\dot{v},\dot{q}\)
		\State \((V,Q)\gets\Retr_{(V,Q)}(\dot{V},\dot{Q})\)
        \State Compute the new loss $\cL'$ from \((V, Q)\)
		\If{$\cL>\cL'$}
			\State \(\lambda\gets\lambda/\kappa_-\)
            \State \textbf{break}
		\Else
            \State \(\lambda\gets\lambda\cdot\kappa_+\)
		\EndIf
	\EndFor
\EndFor
\State\Output \((V,Q)\)
\State\Function{SolveInner}{$H, G, J, \lambda$}
    \State Add $\lambda I$ to the $V$ and $Q$ blocks of $H$;
    \State Form block matrices as in (\ref{eqn:schur});
    \State$S \gets L_{22}-L_{12}^\top D_{11}^{-1}L_{12}$
    \State$\begin{bmatrix}\dot{q}\\
z
\end{bmatrix}
\gets\begin{bmatrix}b_{2}\\
0
\end{bmatrix}-L_{12}^\top D_{11}^{-1}b_{1}$
    \State$\dot{v}\gets D_{11}^{-1}\mleft(b_{1}-L^{12}\begin{bmatrix}\dot{q}\\
z
\end{bmatrix}\mright)$ 
    \State\Return{\(\dot{v},\dot{q}\)}
\EndFunction

\end{algorithmic}
\end{algorithm}
\end{appendices}

\end{document}